\icmltitlerunning{Policy Evaluation with Variance Related Risk Criteria}
\newtheorem{theorem}{Theorem}
\newtheorem{proposition}[theorem]{Proposition}
\newtheorem{lemma}[theorem]{Lemma}
\newtheorem{definition}[theorem]{Definition}
\newtheorem{assumption}[theorem]{Assumption}
\newtheorem{condition}{A}
\newcommand{\E}{\mathbb{E}}
\newcommand{\R}{\mathbb{R}}
\begin{document}

\twocolumn[
\icmltitle{Policy Evaluation with Variance Related Risk Criteria in Markov Decision Processes}

\icmlauthor{Aviv Tamar}{avivt@tx.technion.ac.il}
\icmlauthor{Dotan Di Castro}{dot@tx.technion.ac.il}
\icmlauthor{Shie Mannor}{shie@ee.technion.ac.il}
\icmladdress{Department of Electrical Engineering, The Technion - Israel Institute of Technology, Haifa, Israel 32000}

\icmlkeywords{Reinforcement learning, risk-sensitive optimization, mean-variance estimation}

\vskip 0.3in
]

\begin{abstract}
In this paper we extend temporal difference policy evaluation algorithms to performance criteria that include the variance of the cumulative reward. Such criteria are useful for risk management, and are important in domains such as finance
and process control. We propose both TD(0) and LSTD($\lambda$) variants with linear function approximation, prove their convergence, and demonstrate their utility in a 4-dimensional continuous state space problem.
\end{abstract}

\section{Introduction}

In both Reinforcement Learning (RL; \citealp{BT96}) and planning in Markov Decision Processes (MDPs; \citealp{Puterman1994}), the typical objective is to maximize the cumulative (possibly discounted) expected reward, denoted by $J$.
In many applications, however, the decision maker is also interested in minimizing some form of \emph{risk} of the policy.
By risk, we mean reward criteria that take into account not only the expected reward, but also some additional statistics of the total reward such as its variance, its Value at Risk, etc. \cite{luenbergerinvestment}.

In this work we focus on risk measures that involve the \emph{variance of the cumulative reward}, denoted by $V$.
Typical performance criteria that fall under this definition include
\renewcommand{\theenumi}{(\alph{enumi})}
\renewcommand{\labelenumi}{\theenumi}
\begin{enumerate}
\item Maximize $J$ s.t. $V \le c$
\item Minimize $V$ s.t. $J \ge c$
\item Maximize the Sharpe Ratio: $J/\sqrt{V}$
\item Maximize $J - c \sqrt{V}$
\end{enumerate}
The rationale behind our choice of risk measure is that these performance criteria, such as the Sharpe Ratio \cite{sharpe1966mutual} mentioned above, are being used in practice. Moreover, it seems that human decision makers understand how to use variance well, in comparison to exponential utility functions \cite{Howard1972Risk}, which require determining a non-intuitive exponent coefficient.

A fundamental concept in RL is the the value function - the expected reward to go from a given state. Estimates of the value function drive most RL algorithms, and efficient methods for obtaining these estimates have been a prominent area of research. In particular, Temporal Difference (TD; \cite{SutBar98}) based methods have been found suitable for problems where the state space is large, requiring some sort of function approximation. TD methods enjoy theoretical guarantees (\citealp{Ber2012DynamicProgramming}; \citealp{lazaric2010finite}) and empirical success \cite{tesauro1995temporal}, and are considered the state of the art in policy evaluation.

In this work we present a TD framework for estimating the \emph{variance of the reward to go}. Our approach is based on the following key observation: the second moment of the reward to go, denoted by $M$, together with the value function $J$, obey a linear equation - similar to the Bellman equation that drives regular TD algorithms. By extending TD methods to jointly estimate $J$ and $M$, we obtain a solution for estimating the variance, using the relation $V = M - J^2$.

We propose both a variant of Least Squares Temporal Difference (LSTD) \cite{boyan2002technical} and of TD(0) \cite{SutBar98} for jointly estimating $J$ and $M$ with a linear function approximation. For these algorithms, we provide convergence guarantees and error bounds. In addition, we introduce a novel approach for enforcing the approximate variance to be positive, through a constrained TD equation.

Finally, an empirical evaluation on a challenging continuous maze domain highlights both the usefulness of our approach, and the importance of the variance function in understanding the risk of a policy.

This paper is organized as follows. In Section \ref{sec:background} we present our formal RL setup. In Section \ref{sec:MDP_var_TD} we derive the fundamental equations for jointly approximating $J$ and $M$, and discuss their properties. A solution to these equations may be obtained by simulation, through the use of TD algorithms, as presented in Section \ref{sec:var_estimation}. In Section \ref{sec:positive_var} we further extend the LSTD framework by forcing the approximated variance to be positive. Section \ref{sec:experiments} presents an empirical evaluation, and Section \ref{sec:conclusion} concludes, and discusses future directions.

\section{Framework and Background}\label{sec:background}

We consider a Stochastic Shortest Path (SSP) problem\footnote{This is also known as an episodic setup.} \cite{Ber2012DynamicProgramming}, where the environment is modeled by an MDP in discrete time with a finite state set $X\triangleq\{1,\ldots,n\}$ and a terminal state $x^*$. A fixed policy $\pi$ determines, for each $x\in X$, a stochastic transition to a subsequent state $y \in \{ X \cup x^*\}$ with probability $P(y|x)$. We consider a deterministic and bounded reward function $r:X \to \R$.
We denote by $x_k$ the state at time $k$, where $k = 0,1,2,\ldots$.

A policy is said to be \emph{proper} \cite{Ber2012DynamicProgramming} if there is a positive probability that the terminal state $x^*$ will be reached after at most $n$ transitions, from any initial state. In this paper we make the following assumption
\begin{assumption}\label{assumption:proper_policy}
The policy $\pi$ is proper.
\end{assumption}

Let $\tau\triangleq\min\{k>0|x_{k}=x^*\}$ denote the first visit time to the terminal state, and let the random variable $B$ denote the accumulated reward along the trajectory until that time\footnote{We do not define the reward at the terminal state as it is not relevant to our performance criteria. However, the customary zero terminal reward may be assumed throughout the paper.}
\begin{equation*}
\label{eq:B}
B\triangleq\sum_{k=0}^{\tau-1}r(x_k).
\end{equation*}
In this work, we are interested in the mean-variance tradeoff in $B$, represented by the \emph{value function}
\begin{equation*}
\label{eq:value_func}
J(x)\triangleq\E\left[B | x_{0}=x\right], \quad x\in X,
\end{equation*}
and the \emph{variance of the reward to go}
\begin{equation*}
\label{eq:var_func}
V(x)\triangleq\textrm{Var}\left[B | x_{0}=x\right], \quad x\in X.
\end{equation*}
We will find it convenient to define also the \emph{second moment of the reward to go}
\begin{equation*}
\label{eq:var_func}
M(x)\triangleq \E \left[B^2 | x_{0}=x\right], \quad x\in X.
\end{equation*}

Our goal is to estimate $J(x)$ and $V(x)$ from trajectories obtained by simulating the MDP with policy $\pi$.

\section{Approximation of the Variance of the Reward To Go}\label{sec:MDP_var_TD}
In this section we derive a projected equation method for approximating $J(x)$ and $M(x)$ using linear function approximation. The estimation of $V(x)$ will then follow from the relation $V(x) = M(x) - J(x)^2$.

Our starting point is a system of equations for $J(x)$ and $M(x)$, first derived by \citet{sobel1982variance} for a discounted infinite horizon case, and extended here to the SSP case. Note that the equation for $J$ is the well known Bellman equation for a fixed policy, and independent of the equation for $M$.
\begin{proposition}\label{prop:equations}
The following equations hold for $x\in X$
\begin{equation}\label{eq:bellman_TJ}
    \begin{split}
        J(x) &= r(x) + \sum_{y\in X}P(y|x) J(y), \\
        M(x) &= r(x)^2 + 2r(x) \sum_{y\in X}P(y|x) J(y)
                + \sum_{y\in X}P(y|x) M(y).
    \end{split}
\end{equation}
Furthermore, under Assumption \ref{assumption:proper_policy} a unique solution to \eqref{eq:bellman_TJ} exists.
\end{proposition}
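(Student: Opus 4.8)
The plan is to establish the two equations by conditioning on the first transition, and then to prove existence and uniqueness by recognizing the system as a fixed point of a contraction-like operator built from the transition structure of a proper policy.

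The plan is to derive the two equations by conditioning on the first transition, and then to obtain existence and uniqueness by recasting the system in matrix form and invoking the invertibility of $I-P$ for a proper policy.

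First I would establish the equations. Writing $B = r(x_0) + B'$, where $B'$ denotes the reward accumulated from time $1$ until termination, the Markov property and time-homogeneity give that, conditioned on $x_1 = y$ with $y\in X$, the variable $B'$ is distributed as $B$ started from $y$; if $y=x^*$ then $\tau=1$ and $B'=0$. For the value function this immediately yields $J(x) = r(x) + \sum_{y\in X}P(y|x)J(y)$ by the law of total expectation, the terminal state dropping out since it contributes zero reward to go. For the second moment I would expand $B^2 = r(x)^2 + 2r(x)B' + (B')^2$ and take conditional expectations term by term, using $\E[B'|x_1=y]=J(y)$ and $\E[(B')^2|x_1=y]=M(y)$, which produces the stated equation for $M$. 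A preliminary remark is needed that $J$ and $M$ are finite: since $r$ is bounded and, as shown below, $\tau$ has a geometric tail, $B$ has a finite second moment.

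Next I would write the system in matrix form. Let $P$ be the $n\times n$ substochastic matrix with entries $P(y|x)$ for $x,y\in X$, and let $r$, $J$, $M$ denote the corresponding vectors. The two equations become $(I-P)J = r$ and $(I-P)M = \bar r$, where $\bar r(x) = r(x)^2 + 2r(x)(PJ)(x)$ depends only on the already-determined $J$. The crucial observation is that the equation for $J$ is decoupled: one solves $J=(I-P)^{-1}r$ first, then forms $\bar r$, and finally obtains $M=(I-P)^{-1}\bar r$. Hence both existence and uniqueness reduce to a single fact, the invertibility of $I-P$.

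The main obstacle is establishing that $I-P$ is invertible, which is where Assumption \ref{assumption:proper_policy} enters. I would argue as follows: properness means that from every state the probability of not reaching $x^*$ within $n$ steps is at most some $\rho<1$, so that $\Pr(\tau > kn)\le \rho^{k}$ and the substochastic matrix satisfies $\lim_{k\to\infty}P^{k}=0$, equivalently the spectral radius of $P$ is strictly less than one. Consequently the Neumann series $\sum_{k=0}^{\infty}P^{k}$ converges to $(I-P)^{-1}$, so $I-P$ is invertible and each of the two linear systems has exactly one solution. This contraction-type property of a proper policy is the standard SSP result I would cite from \citet{Ber2012DynamicProgramming}, and verifying it is the only substantive step beyond the routine conditioning arguments.
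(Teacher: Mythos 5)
Your derivation of the two equations is essentially the paper's: both condition on the first transition, write $B = r(x_0) + B'$, and expand $B^2$ term by term using the Markov property, with the terminal state dropping out of the sums. Where you genuinely diverge is the existence--uniqueness part. The paper handles it by reduction: uniqueness of $J$ is the standard SSP result (Proposition 3.2.1 of \citealp{Ber2012DynamicProgramming}), and uniqueness of $M$ follows by the observation that the $M$-equation is itself a Bellman equation for the \emph{same} proper policy with the modified reward $r(x)^2 + 2r(x)\sum_{y}P(y|x)J(y)$, so the same proposition applies a second time. You instead exploit the fact that both equations share the matrix $I-P$: properness gives $\Pr(\tau > kn) \le \rho^k$ with $\rho = \max_x \Pr(\tau > n \mid x_0 = x) < 1$ (the maximum over the finite state set), hence $P^k \to 0$, the spectral radius of $P$ is below one, the Neumann series $\sum_k P^k = (I-P)^{-1}$ converges, and the triangular structure yields $J = (I-P)^{-1}r$ followed by $M = (I-P)^{-1}\bar r$. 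Both routes are correct; yours is more self-contained (it proves, rather than cites, the invertibility, and delivers existence and uniqueness for both equations in one stroke, with explicit closed forms), while the paper's is more modular and shorter, reusing a known result without re-deriving the contraction machinery. Your preliminary finiteness remark --- that the geometric tail of $\tau$ together with bounded $r$ gives $\E[B^2 \mid x_0 = x] < \infty$ --- is a point the paper leaves implicit, and it is a worthwhile addition since the conditioning manipulations require it.
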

The proof is straightforward, and given in Appendix \ref{supp:prop:equations}.

At this point the reader may wonder why an equation for $V$ is not presented. While such an equation may be derived, as was done in \cite{Tamar2012mean_var}, it is not linear. The linearity of \eqref{eq:bellman_TJ} is the key to our approach.
As we show in the next subsection, the solution to \eqref{eq:bellman_TJ} may be expressed as the fixed point of a linear mapping in the joint space of $J$ and $M$. We will then show that a projection of this mapping onto a linear feature space is contracting, thus allowing us to use existing TD theory to derive estimation algorithms for $J$ and $M$.

\subsection{A Projected Fixed Point Equation on the Joint Space of $J$ and $M$}

For the sequel we introduce the following vector notations. We denote by $P\in \R^{n \times n}$ and $r\in \R^n$ the SSP transition matrix and reward vector, i.e., $P_{x,y} = P(y|x)$ and $r_x = r(x)$, where $x,y\in X$. Also, we define $R \triangleq diag(r)$.

For a vector $z\in \R^{2n}$ we let $z_J\in \R^n$ and $z_M\in \R^n$ denote its leading and ending $n$ components, respectively. Thus, such a vector belongs to the joint space of $J$ and $M$.

We define the mapping $T:\R^{2n}\to \R^{2n}$ by
\begin{equation*}
\begin{split}
  [T z]_J &= r + P z_J, \\
  [T z]_M &= Rr + 2R P z_J + P z_M.
\end{split}
\end{equation*}
It may easily be verified that a fixed point of $T$ is a solution to \eqref{eq:bellman_TJ}, and by Proposition \ref{prop:equations} such a fixed point exists and is unique.

When the state space $X$ is large, a direct solution of \eqref{eq:bellman_TJ} is not feasible, even if $P$ may be accurately obtained. A popular approach in this case is to approximate $J(x)$ by restricting it to a lower dimensional subspace, and use simulation based TD algorithms to adjust the approximation parameters \cite{Ber2012DynamicProgramming}. In this paper we extend this approach to the approximation of $M(x)$ as well.

We consider a linear approximation architecture of the form
\begin{equation}\label{linear_FA}
    \begin{split}
      \tilde{J}(x) &= \phi_J(x)^T w_J, \\
      \tilde{M}(x) &= \phi_M(x)^T w_M,
    \end{split}
\end{equation}
where $w_J\in\R^{l_J}$ and $w_M\in\R^{l_M}$ are the approximation parameter vectors, $\phi_J(x)\in\R^{l_J}$ and $\phi_M(x)\in\R^{l_M}$ are state dependent features, and $(\cdot)^T$ denotes the transpose of a vector. The low dimensional subspaces are therefore
\begin{equation*}
    \begin{split}
       S_J &= \{ \Phi_J w | w\in\R^{s_J} \}, \\
       S_M &= \{ \Phi_M w | w\in\R^{s_M} \},
     \end{split}
\end{equation*}
where $\Phi_J$ and $\Phi_M$ are matrices whose rows are $\phi_J(x)^T$ and $\phi_M(x)^T$, respectively. We make the following standard independence assumption on the features
\begin{assumption}\label{assumption:Phi_rank}
The matrix $\Phi_J$ has rank $l_J$ and the matrix $\Phi_M$ has rank $l_M$.
\end{assumption}

As outlined earlier, our goal is to estimate $w_J$ and $w_M$ from simulated trajectories of the MDP. Thus, it is constructive to consider projections onto $S_J$ and $S_M$ with respect to a norm that is weighted according to the state occupancy in these trajectories.

For a trajectory $x_0,\dots,x_{\tau-1}$, where $x_0$ is drawn from a fixed distribution $\zeta_0(x)$, and the states evolve according to the MDP with policy $\pi$, define the state occupancy probabilities
\begin{equation*}
    q_t(x) = P(x_t = x), \quad x \in X, \quad t = 0,1,\dots
\end{equation*}
and let
\begin{equation*}
\begin{split}
    q(x) &= \sum_{t=0}^{\infty} q_t(x), \quad x \in X \\
    Q &{\triangleq} diag(q).
\end{split}
\end{equation*}
We make the following assumption on the policy $\pi$ and initial distribution $\zeta_0$
\begin{assumption}\label{assumption:all_states_visited}
Each state has a positive probability of being visited, namely, $q(x)>0$ for all $x\in X$.
\end{assumption}

For vectors in $\R^n$, we introduce the weighted Euclidean norm
\begin{equation*}
    \|y\|_q = \sqrt{\sum_{i=1}^{n} q(i) \left( y(i) \right)^2}, \quad y \in \R^n,
\end{equation*}
and we denote by $\Pi_J$ and $\Pi_M$ the projections from $\R^n$ onto the subspaces $S_J$ and $S_M$, respectively, with respect to this norm. For $z\in \R^{2n}$ we denote by $\Pi$ the projection of $z_J$ onto $S_J$ and $z_M$ onto $S_M$, namely
\footnote{The projection operators $\Pi_J$ and $\Pi_M$ are linear, and may be written explicitly as $\Pi_J = \Phi_J(\Phi_J^T Q \Phi_J )^{-1} \Phi_J^T Q$, and similarly for $\Pi_M$.}
\begin{equation}\label{eq:Pi_JM}
    \Pi = \left(
                 \begin{array}{cc}
                   \Pi_J & 0 \\
                   0 & \Pi_M \\
                 \end{array}
               \right).
\end{equation}

We are now ready to fully describe our approximation scheme. We consider the \emph{projected} fixed point equation
\begin{equation}\label{eq:projected_fixed_point}
    z = \Pi T z,
\end{equation}
and, letting $z^*$ denote its solution, propose the approximate value function $\tilde{J} = z^*_J \in S_J$ and second moment function $\tilde{M} = z^*_M \in S_M$.

We proceed to derive some properties of the projected fixed point equation \eqref{eq:projected_fixed_point}. We begin by stating a well known result regarding the contraction properties of the \emph{projected Bellman operator} $\Pi_J T_J$, where $T_J y = r + Py$. A proof can be found at \cite{Ber2012DynamicProgramming}, proposition 7.1.1.
\begin{lemma}\label{lemma:PI_T_contraction}
Let Assumptions \ref{assumption:proper_policy}, \ref{assumption:Phi_rank}, and \ref{assumption:all_states_visited} hold. Then, there exists some norm $\| \cdot \|_J$ and some $\beta_J < 1$ such that
\begin{equation*}
  \|\Pi_J Py\|_J \leq \beta_J \|y\|_J, \quad \forall y \in \R^n.
\end{equation*}
Similarly, there exists some norm $\| \cdot \|_M$ and some $\beta_M < 1$ such that
\begin{equation*}
  \|\Pi_M Py\|_M \leq \beta_M \|y\|_M, \quad \forall y \in \mathbb{R}^n.
\end{equation*}
\end{lemma}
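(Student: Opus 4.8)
The plan is to prove the sharper statement that $\Pi_J P$ and $\Pi_M P$ both have spectral radius strictly less than $1$, and then to invoke the standard matrix-analysis fact that any square matrix $A$ with $\rho(A)<1$ admits, for every $\epsilon>0$, a vector norm whose induced operator norm satisfies $\|A\|\le\rho(A)+\epsilon$. Choosing $\epsilon$ small enough yields a norm $\|\cdot\|_J$ with $\beta_J\triangleq\|\Pi_J P\|_J<1$, and the first inequality follows by definition of the operator norm. Since the coefficient of $z_M$ in $[Tz]_M$ is again $P$ and $\Pi_M$ is the orthogonal projection onto $S_M$, the identical argument applied to $\Pi_M P$ produces $\|\cdot\|_M$ and $\beta_M$. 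Thus both assertions reduce to a single spectral-radius bound, which I argue for $\Pi_J P$.

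First I would record that Assumption \ref{assumption:proper_policy} forces $\rho(P)<1$. For a proper policy the terminal state is reached from every initial state with positive probability within $n$ steps, so the substochastic matrix $P$ satisfies $\|P^n\|_\infty\le 1-\epsilon_0$ for some $\epsilon_0>0$; hence $\rho(P)=\lim_k\|P^k\|_\infty^{1/k}<1$, and in particular $P$ has no eigenvalue on the unit circle.

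Next I would work in the inner-product norm $\|\cdot\|_q$, which is a genuine norm by Assumption \ref{assumption:all_states_visited} since $q(i)>0$, and show that both factors of $\Pi_J P$ are non-expansive in it. The projection $\Pi_J$ is the orthogonal projection onto $S_J$ for $\langle a,b\rangle_q=a^T Q b$ (well defined by Assumption \ref{assumption:Phi_rank}), so $\|\Pi_J v\|_q\le\|v\|_q$, with equality iff $v\in S_J$. For $P$, a row-wise Cauchy--Schwarz bound together with substochasticity gives $[Py](i)^2\le\sum_j P_{i,j}\,(y(j))^2$; summing against $q$ and using the occupancy balance $q=\zeta_0+P^T q$, which yields $\sum_i q(i)P_{i,j}=q(j)-\zeta_0(j)\le q(j)$, gives $\|Py\|_q^2\le\|y\|_q^2$. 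Hence $\|\Pi_J P\|_q\le 1$ and $\rho(\Pi_J P)\le 1$.

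The main step, and the one I expect to be the crux, is upgrading this to a strict bound. Suppose $\Pi_J P u=\lambda u$ with $u\neq 0$ and $|\lambda|=1$, extending $\|\cdot\|_q$ and $\Pi_J$ to $\mathbb{C}^n$ via the Hermitian form. Then $u=\lambda^{-1}\Pi_J P u\in S_J$, and the chain $|\lambda|\,\|u\|_q=\|\Pi_J P u\|_q\le\|Pu\|_q\le\|u\|_q$ collapses to a string of equalities. Equality in the projection step forces $Pu\in S_J$, so $\Pi_J P u=Pu$ and therefore $Pu=\lambda u$; but then $\lambda$ is an eigenvalue of $P$ of modulus one, contradicting $\rho(P)<1$. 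Hence $\rho(\Pi_J P)<1$, completing the reduction. I would finally remark that one cannot in general replace $\|\cdot\|_J$ by $\|\cdot\|_q$ itself: the slack in the non-expansiveness estimate for $P$ is $\|y\|_{\zeta_0}^2$, which dominates a fixed fraction of $\|y\|_q^2$ only when $\zeta_0$ has full support, and this is precisely why the statement asks merely for \emph{some} norm — the passage through the spectral radius is what removes the dependence on the support of $\zeta_0$.
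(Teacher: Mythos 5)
Your proof is correct and takes essentially the same route as the paper's, which is given purely by citation to Proposition 7.1.1 of \cite{Ber2012DynamicProgramming}: that proof likewise establishes nonexpansiveness of $P$ in $\|\cdot\|_q$ via the occupancy identity $q = \zeta_0 + P^T q$, rules out unit-modulus eigenvalues of $\Pi_J P$ (and $\Pi_M P$) by the same equality/Pythagoras argument forcing $Pu \in S_J$, and converts $\rho(\Pi_J P)<1$ into a contraction in an equivalent norm. Your closing observation about why $\|\cdot\|_q$ itself cannot serve when $\zeta_0$ lacks full support is exactly the reason the lemma (and the cited proposition, for $\lambda=0$) asserts the contraction only in \emph{some} norm.
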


Next, we define a weighted norm on $\R^{2n}$
\begin{definition}
For a vector $z \in \R^{2n}$ and a scalar $0 < \alpha < 1$, the $\alpha$-weighted norm is
\begin{equation}\label{eq:alpha_norm}
  \|z\|_{\alpha} = \alpha \|z_J\|_J + (1-\alpha) \|z_M\|_M,
\end{equation}
where the norms $\| \cdot \|_J$ and $\| \cdot \|_M$ are defined in Lemma \ref{lemma:PI_T_contraction}.
\end{definition}
Our main result of this section is given in the following lemma, where we show that the projected operator $\Pi T$ is a contraction with respect to the  $\alpha$-weighted norm.
\begin{lemma}\label{lemma:PI_T_JM_contraction}
Let Assumptions \ref{assumption:proper_policy}, \ref{assumption:Phi_rank}, and \ref{assumption:all_states_visited} hold. Then, there exists some $0<\alpha<1$ and some $\beta < 1$ such that $\Pi T$ is a $\beta$-contraction with respect to the $\alpha$-weighted norm, i.e.,
\begin{equation*}
  \|\Pi T z\|_{\alpha} \leq \beta \|z\|_{\alpha}, \quad \forall z \in \R^{2n}.
\end{equation*}
\end{lemma}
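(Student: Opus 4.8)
The plan is to exploit the block-triangular structure of $T$ together with the two separate contraction results of Lemma~\ref{lemma:PI_T_contraction}. First I would dispose of the affine part: writing $\bar T$ for the linear part of $T$, with $[\bar T y]_J = P y_J$ and $[\bar T y]_M = 2 R P y_J + P y_M$, one has $\Pi T z_1 - \Pi T z_2 = \Pi \bar T (z_1 - z_2)$, so the constant terms $r$ and $Rr$ do not affect contraction and it suffices to bound $\|\Pi \bar T y\|_{\alpha}$ for arbitrary $y \in \R^{2n}$. Using the block-diagonal form of $\Pi$ in \eqref{eq:Pi_JM} and the definition \eqref{eq:alpha_norm} of the $\alpha$-weighted norm, I would expand
\begin{equation*}
\|\Pi \bar T y\|_{\alpha} = \alpha \, \|\Pi_J P y_J\|_J + (1-\alpha)\,\|\Pi_M(2RP y_J + P y_M)\|_M .
\end{equation*}

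The next step treats the two components separately. The $J$-component is handled directly by Lemma~\ref{lemma:PI_T_contraction}, giving $\|\Pi_J P y_J\|_J \le \beta_J \|y_J\|_J$. For the $M$-component I would split the argument with the triangle inequality into the ``diagonal'' term $\Pi_M P y_M$ and the ``cross'' term $2\Pi_M R P y_J$. Lemma~\ref{lemma:PI_T_contraction} again bounds the diagonal term, $\|\Pi_M P y_M\|_M \le \beta_M\|y_M\|_M$. The cross term, however, carries no contraction factor — it is merely a fixed linear map from the $J$-space into the $M$-space — so I would bound it using the equivalence of norms on the finite-dimensional space $\R^n$: there is a finite constant $C$ with $\|\Pi_M R P y_J\|_M \le C \|y_J\|_J$ for all $y_J$. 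Collecting the three bounds yields
\begin{equation*}
\|\Pi \bar T y\|_{\alpha} \le \big(\alpha \beta_J + 2(1-\alpha)C\big)\|y_J\|_J + (1-\alpha)\beta_M\|y_M\|_M .
\end{equation*}

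Finally I would choose the weight $\alpha$ to absorb the cross term. Matching this against $\beta\,\|y\|_{\alpha} = \beta\alpha\|y_J\|_J + \beta(1-\alpha)\|y_M\|_M$, it suffices to require $\beta \ge \beta_M$ and $\beta \ge \beta_J + 2(1-\alpha)C/\alpha$. Since $\beta_J<1$, I can pick $\alpha$ close enough to $1$ that $2(1-\alpha)C/\alpha < 1-\beta_J$, and then set $\beta = \max\{\beta_M,\, \beta_J + 2(1-\alpha)C/\alpha\} < 1$, which completes the argument.

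I expect the crux of the proof to be exactly this last maneuver. The obstacle is the coupling term $2RP y_J$ in the $M$-equation, which prevents $\Pi \bar T$ from being a contraction in any symmetric product norm: the $M$-dynamics are driven by $J$ with a gain that has no reason to be small. The resolution is the asymmetric weighting — by loading almost all of the mass onto the $J$-component (taking $\alpha$ near $1$), the leakage from $J$ into $M$, which enters the norm with weight $(1-\alpha)$, is made negligible relative to the $\alpha\|y_J\|_J$ budget it must be charged against, while the genuinely contracting self-dynamics of both $J$ and $M$ are left intact. A minor point to verify carefully is that $C$ is finite and independent of $y$, which holds since $\Pi_M R P$ is a fixed matrix and all norms on $\R^n$ are equivalent.
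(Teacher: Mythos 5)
Your proof is correct and takes essentially the same route as the paper's: the separate contraction bounds of Lemma~\ref{lemma:PI_T_contraction}, a finite norm-equivalence constant $C$ for the cross term $2\Pi_M R P$, and an asymmetric choice of $\alpha$ near $1$ to absorb it --- the paper's parametrization $\alpha = C/(\epsilon + C)$ with $\beta = \max\{\beta_J,\beta_M\} + \epsilon$ is the same maneuver in different clothing. Your explicit reduction to the linear part $\bar T$ is a point the paper handles only implicitly by passing directly to the matrix $\mathcal{P}$, and it correctly clarifies the slight abuse in the lemma's statement (since $T$ is affine, not linear).
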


\begin{proof}
Let $\cal{P}$ denote the following matrix in $\R^{2n \times 2n}$
\begin{equation*}
    \cal{P} = \left( \begin{array}{cc}
                P & 0 \\
                2RP & P\\
                \end{array} \right),
\end{equation*}
and let $z\in\R^{2n}$. We need to show that
\begin{equation*}
  \|\Pi {\cal P} z\|_{\alpha} \leq \beta \|z\|_{\alpha}.
\end{equation*}
From \eqref{eq:Pi_JM} we have
\begin{equation*}
    \Pi {\cal P} = \left( \begin{array}{cc}
                    \Pi_J P & 0 \\
                    2\Pi_M R P & \Pi_M P\\
                \end{array} \right).
\end{equation*}
Therefore, we have
\begin{equation}\label{eq:cont_proof_1}
    \begin{split}
       \|\Pi {\cal P} z\|_{\alpha} =& \alpha \|\Pi_J P z_J\|_J \\ &+ (1-\alpha) \|2\Pi_M R P z_J + \Pi_M P z_M\|_M \\
         {\leq}& \alpha \|\Pi_J P z_J\|_J \\ &+ (1-\alpha) \|\Pi_M P z_M\|_M \\ &+ (1-\alpha) \|2\Pi_M R P z_J\|_M \\
         {\leq}& \alpha \beta_J \|z_J\|_J \\ &+ (1-\alpha) \beta_M \|z_M\|_M \\ &+ (1-\alpha) \|2\Pi_M R P z_J\|_M ,\\
     \end{split}
\end{equation}
where the equality is by definition of the $\alpha$ weighted norm \eqref{eq:alpha_norm}, the first inequality is from the triangle inequality, and the second inequality is by Lemma \ref{lemma:PI_T_contraction}. Now, we claim that there exists some finite $C$ such that
\begin{equation}\label{eq:cont_proof_2}
    \|2\Pi_M R P y\|_M \leq C \|y\|_J, \quad \forall y \in \R^n.
\end{equation}
To see this, note that since $\R^{n}$ is a finite dimensional real vector space, all vector norms are equivalent \cite{HornJohnson} therefore there exist finite $C_1$ and $C_2$ such that for all $y \in \R^n$
\begin{equation*}
    C_1 \|2\Pi_M R P y\|_2 \leq \|2\Pi_M R P y\|_M \leq C_2 \|2\Pi_M R P y\|_2,
\end{equation*}
where $\|\cdot\|_2$ denotes the Euclidean norm. Let $\lambda$ denote the spectral norm of the matrix $2\Pi_M R P$, which is finite since all the matrix elements are finite. We have
\begin{equation*}
    \|2\Pi_M R P y\|_2 \leq \lambda \|y\|_2, \quad \forall y \in \R^n.
\end{equation*}
Using again the fact that all vector norms are equivalent, there exists a finite $C_3$ such that
\begin{equation*}
    \|y\|_2 \leq C_3 \|y\|_J, \quad \forall y \in \R^n.
\end{equation*}
Setting $C = C_2 \lambda C_3$ we get the desired bound. Let $\tilde{\beta} = \max \{\beta_J, \beta_M \} < 1$, and choose $\epsilon > 0$ such that
\begin{equation*}
    \tilde{\beta}+\epsilon < 1.
\end{equation*}
Now, choose $\alpha$ such that
\begin{equation*}
    \alpha = \frac{C}{\epsilon+C}.
\end{equation*}
We have that
\begin{equation*}
    (1- \alpha) C = \alpha \epsilon,
\end{equation*}
and plugging in \eqref{eq:cont_proof_2}
\begin{equation*}
    (1-\alpha) \|2\Pi_M R P y\|_M \leq \alpha \epsilon \|y\|_J.
\end{equation*}
Plugging in \eqref{eq:cont_proof_1} we have
\begin{equation*}
    \begin{split}
        & \alpha \beta_J \|z_J\|_J + (1-\alpha) \beta_M \|z_M\|_M + (1-\alpha) \|2\Pi_M R P z_J\|_M \\
        &{\leq} \alpha \beta_J \|z_J\|_J + (1-\alpha) \beta_M \|z_M\|_M + \alpha \epsilon \|z_J\|_J \\
        &{\leq} (\tilde{\beta}+\epsilon)\left(\alpha \|z_J\|_J + (1-\alpha) \|z_M\|_M \right)\\
     \end{split}
\end{equation*}
and therefore
\begin{equation*}
       \|\Pi {\cal P} z\|_{\alpha} \leq (\tilde{\beta}+\epsilon) \| z\|_{\alpha}
\end{equation*}
Finally, choose $\beta = \tilde{\beta} + \epsilon$.
\end{proof}

Lemma \ref{lemma:PI_T_JM_contraction} guarantees that the projected operator $\Pi T$ has a unique fixed point. Let us denote this fixed point by $z^*$, and let $w^*_J,w^*_M$ denote the corresponding weights, which are unique due to Assumption \ref{assumption:Phi_rank}
\begin{equation}\label{eq:PI_T_JM_Fixed_point}
    \begin{split}
      \Pi T z^* &= z^*, \\
      z^*_J &= \Phi_J w^*_J, \\
      z^*_M &= \Phi_M w^*_M.
    \end{split}
\end{equation}

In the next lemma we provide a bound on the approximation error. The proof is in Appendix \ref{supp:lemma:PI_T_JM_error}.
\begin{lemma}\label{lemma:PI_T_JM_error}
Let Assumptions \ref{assumption:proper_policy}, \ref{assumption:Phi_rank}, and \ref{assumption:all_states_visited} hold. Denote by $z_{true}\in \R^{2n}$ the true value and second moment functions, i.e., $z_{true}$ satisfies $z_{true} = T z_{true}$. Then,
\begin{equation*}
  \|z_{true} - z^*\|_{\alpha} \leq \frac{1}{1-\beta} \|z_{true} - \Pi z_{true}\|_{\alpha},
\end{equation*}
with $\alpha$ and $\beta$ defined in Lemma \ref{lemma:PI_T_JM_contraction}.
\end{lemma}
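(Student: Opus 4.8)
The plan is to use the standard contraction-mapping error bound, exploiting that $z^*$ is the fixed point of $\Pi T$ (recorded in \eqref{eq:PI_T_JM_Fixed_point}) while $z_{true}$ is the fixed point of $T$. First I would apply the triangle inequality for the $\alpha$-weighted norm to split the error into a projection term and a residual term:
\[
\|z_{true} - z^*\|_{\alpha} \le \|z_{true} - \Pi z_{true}\|_{\alpha} + \|\Pi z_{true} - z^*\|_{\alpha}.
\]
The first term on the right is exactly the quantity appearing in the claimed bound, so the remaining work is entirely in controlling the second term.

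For that second term I would use the two fixed-point identities. Since $z_{true} = T z_{true}$, applying $\Pi$ to both sides gives $\Pi z_{true} = \Pi T z_{true}$; combined with $z^* = \Pi T z^*$ this lets me write $\Pi z_{true} - z^* = \Pi T z_{true} - \Pi T z^*$. I then invoke Lemma \ref{lemma:PI_T_JM_contraction}. Because $T$ is affine, its constant term $(r, Rr)$ cancels in the difference, leaving $\Pi T z_{true} - \Pi T z^* = \Pi \mathcal{P}(z_{true} - z^*)$ with $\mathcal{P}$ the linear part from the proof of Lemma \ref{lemma:PI_T_JM_contraction}, so the contraction estimate applies to this difference and yields $\|\Pi z_{true} - z^*\|_{\alpha} \le \beta \|z_{true} - z^*\|_{\alpha}$.

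Combining the two displays gives
\[
\|z_{true} - z^*\|_{\alpha} \le \|z_{true} - \Pi z_{true}\|_{\alpha} + \beta\|z_{true} - z^*\|_{\alpha},
\]
and since $\beta < 1$ I can move the $\beta$-term to the left-hand side and divide by $1-\beta$ to obtain the stated bound, with the $\frac{1}{1-\beta}$ factor coming directly from the contraction modulus of Lemma \ref{lemma:PI_T_JM_contraction}.

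The only point requiring genuine care — the rest being the routine Banach-type manipulation above — is the affineness of $T$: the contraction in Lemma \ref{lemma:PI_T_JM_contraction} really governs the linear part $\mathcal{P}$, so I must be sure to apply it to a difference of two arguments, where the constant vector cancels, rather than to a single vector. Note also that this argument uses only the triangle inequality and the contraction property, so it goes through for the weighted norm $\|\cdot\|_{\alpha}$ even though $\Pi$ is not an orthogonal projection with respect to it; accordingly one should not expect to sharpen the constant to the Pythagorean $\frac{1}{\sqrt{1-\beta^2}}$.
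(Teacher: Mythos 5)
Your proof is correct and takes essentially the same route as the paper's: triangle inequality, the fixed-point identities to rewrite $\Pi z_{true} - z^*$ as $\Pi T z_{true} - \Pi T z^*$, the contraction from Lemma \ref{lemma:PI_T_JM_contraction}, and rearrangement with $\beta < 1$. Your additional remark on affineness — that the contraction really governs the linear part $\mathcal{P}$ and must be applied to a difference so the constant term $(r, Rr)$ cancels — is a careful reading of a step the paper performs implicitly, and is a point in your write-up's favor.
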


\section{Simulation Based Estimation Algorithms}\label{sec:var_estimation}
We now use the theoretical results of the previous subsection to derive simulation based algorithms for jointly estimating the value function and second moment. The projected equation \eqref{eq:PI_T_JM_Fixed_point} is linear, and can be written in matrix form as follows. First let us write the equation explicitly as
\begin{equation}\label{eq:projected_eq_explicit}
    \begin{split}
      \Pi_{J} \left( r + P \Phi_J w^*_J \right) &= \Phi_J w^*_J, \\
      \Pi_{M} \left( Rr + 2R P \Phi_J w^*_J + P \Phi_M w^*_M \right) &= \Phi_M w^*_M. \\
    \end{split}
\end{equation}

Projecting a vector $y$ onto $ \Phi w $ satisfies the following orthogonality condition
\begin{equation*}
        \Phi^T Q (y - \Phi w) = 0,
\end{equation*}
therefore we have
\begin{equation*}
    \begin{split}
      \Phi_J^T Q \left( \Phi_J w^*_J - \left( r + P \Phi_J w^*_J \right) \right) &= 0, \\
      \Phi_M^T Q \left( \Phi_M w^*_M - \left( Rr + 2R P \Phi_J w^*_J + P \Phi_M w^*_M \right) \right) &= 0, \\
    \end{split}
\end{equation*}
which can be written as
\begin{equation}\label{eq:fixed_point_matrix_form}
    \begin{split}
      A w^*_J &= b, \\
      C w^*_M &= d, \\
    \end{split}
\end{equation}
with
\begin{equation}\label{eq:fixed_point_matrix_form2}
    \begin{split}
    A &= \Phi_J^T Q \left( I - P \right) \Phi_J, \quad b = \Phi_J^T Q r,\\
    C &= \Phi_M^T Q \left( I - P \right) \Phi_M, \quad d = \Phi_M^T Q R \left( r + 2P \Phi_J A^{-1}b \right),\\
    \end{split}
\end{equation}
and the matrices $A$ and $C$ are invertible since Lemma \ref{lemma:PI_T_JM_contraction} guarantees a unique solution to \eqref{eq:PI_T_JM_Fixed_point} and Assumption \ref{assumption:Phi_rank} guarantees the unique weights of its projection.

\subsection{A Least Squares TD Algorithm}

Our first simulation based algorithm is an extension of the Least Squares Temporal Difference (LSTD) algorithm \cite{boyan2002technical}. We simulate $N$ trajectories of the MDP with the policy $\pi$ and initial state distribution $\zeta_0$. Let $x^k_0,x^k_1,\dots,x^k_{\tau^k-1}$ and $\tau^k$, where $k=0,1,\ldots,N$, denote the state sequence and visit times to the terminal state within these trajectories, respectively. We now use these trajectories to form the following estimates of the terms in \eqref{eq:fixed_point_matrix_form2}
\begin{equation}\label{eq:sampled_AbCd}
    \begin{split}
    A_N &= \E_N \left[\sum_{t=0}^{\tau-1} \phi_J(x_t) (\phi_J(x_{t}) - \phi_J(x_{t+1}))^T \right], \\
    b_N &= \E_N \left[\sum_{t=0}^{\tau-1} \phi_J(x_t) r(x_t)\right], \\
    C_N &= \E_N \left[\sum_{t=0}^{\tau-1} \phi_M(x_t) (\phi_M(x_{t}) - \phi_M(x_{t+1}))^T\right], \\
    d_N &= \E_N \left[\sum_{t=0}^{\tau-1} \phi_M(x_t) r(x_t)\left( r(x_t) + 2 \phi_J(x_{t+1})^T A_N^{-1}b_N \right)\right],
    \end{split}
\end{equation}
where $\E_N$ denotes an empirical average over trajectories, i.e., $\E_N \left[f(x,\tau)\right] = \frac{1}{N}\sum_{k=1}^N f(x^k,\tau^k)$.
The LSTD approximation is given by
\begin{equation*}
    \begin{split}
      \hat{w}^*_{J} &= A_N^{-1}b_N, \\
      \hat{w}^*_{M} &= C_N^{-1}d_N. \\
    \end{split}
\end{equation*}
The next theorem shows that the LSTD approximation converges.
\begin{theorem}\label{th:LSTD_converges}
Let Assumptions \ref{assumption:proper_policy}, \ref{assumption:Phi_rank}, and \ref{assumption:all_states_visited} hold. Then $\hat{w}^*_{J} \to {w}^*_{J}$ and $\hat{w}^*_{M} \to {w}^*_{M}$ as $N \to \infty$ with probability 1.
\end{theorem}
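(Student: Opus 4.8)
The plan is to express the two LSTD weight vectors as continuous functions of the empirical statistics $A_N,b_N,C_N,d_N$ and to show that these statistics converge almost surely to the population quantities $A,b,C,d$ of \eqref{eq:fixed_point_matrix_form2}; the claim then follows because matrix inversion is continuous at invertible matrices, and $A$ and $C$ are invertible by Lemma \ref{lemma:PI_T_JM_contraction} together with Assumption \ref{assumption:Phi_rank}. Concretely, once $A_N\to A$, $b_N\to b$, $C_N\to C$ and $d_N\to d$ (all w.p.\ 1), I obtain $\hat{w}^*_J = A_N^{-1}b_N \to A^{-1}b = w^*_J$ and $\hat{w}^*_M = C_N^{-1}d_N \to C^{-1}d = w^*_M$.

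First I would establish the convergence of the ``base'' statistics by the strong law of large numbers (SLLN). Since the $N$ trajectories are generated independently, each empirical average $\E_N[\cdot]$ in \eqref{eq:sampled_AbCd} is a sample mean of i.i.d.\ per-trajectory quantities of the form $\sum_{t=0}^{\tau-1} g(x_t,x_{t+1})$. To apply the SLLN I must check these summands are integrable, which is exactly where Assumption \ref{assumption:proper_policy} enters: properness gives a geometric tail bound on $\tau$, hence $\E[\tau]<\infty$, while the reward is bounded and the feature vectors take finitely many values, so each per-trajectory sum is dominated by a constant multiple of $\tau$ and is therefore integrable. Identifying the limits then amounts to computing the expectations through the occupancy measure $q$, the key identity being
\begin{equation*}
  \E\left[\sum_{t=0}^{\tau-1} g(x_t,x_{t+1})\right] = \sum_{x\in X} q(x) \sum_{y\in X\cup\{x^*\}} P(y|x)\, g(x,y),
\end{equation*}
where I adopt the convention $\phi_J(x^*)=\phi_M(x^*)=0$ for the terminal state. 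Applying this to the relevant summands in \eqref{eq:sampled_AbCd} yields $A_N \to \Phi_J^T Q(I-P)\Phi_J = A$, $b_N \to \Phi_J^T Q r = b$, and $C_N \to \Phi_M^T Q(I-P)\Phi_M = C$.

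The statistic $d_N$ requires extra care because it depends on $A_N^{-1}b_N$, so it is not itself a simple sample mean. I would decompose it as $d_N = d_N^{(1)} + 2 D_N A_N^{-1} b_N$, where $d_N^{(1)} = \E_N[\sum_{t=0}^{\tau-1}\phi_M(x_t) r(x_t)^2]$ and $D_N = \E_N[\sum_{t=0}^{\tau-1}\phi_M(x_t) r(x_t)\phi_J(x_{t+1})^T]$ are genuine sample means. The same SLLN argument gives $d_N^{(1)} \to \Phi_M^T Q R r$ and $D_N \to \Phi_M^T Q R P \Phi_J$ w.p.\ 1. Combining $A_N\to A$, $b_N\to b$ with the continuity of inversion yields $A_N^{-1}b_N \to A^{-1}b$, and hence $d_N \to \Phi_M^T Q R r + 2\Phi_M^T Q R P\Phi_J A^{-1}b = \Phi_M^T Q R (r + 2 P\Phi_J A^{-1}b) = d$, matching \eqref{eq:fixed_point_matrix_form2}.

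I expect the main obstacle to be the two technical points rather than the algebra: (i) justifying the integrability and the interchange of expectation with the random-length summation needed for the SLLN, both of which hinge on the geometric tail of $\tau$ guaranteed by properness; and (ii) handling the nested dependence of $d_N$ on $(A_N,b_N)$, which rules out a single application of the SLLN and instead forces the continuous-mapping argument above, using crucially that the limit $A$ is invertible so that inversion is continuous there. The terminal-state feature convention must also be stated explicitly, since $\phi_J(x_{t+1})$ is evaluated at $x^*$ on the last step of each trajectory.
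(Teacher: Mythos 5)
Your proposal is correct and follows essentially the same route as the paper's proof: the strong law of large numbers over i.i.d.\ trajectories, the occupancy-measure identities $\E\left[\sum_{t=0}^{\tau-1}\phi_1(x_t)\phi_2(x_t)^T\right]=\sum_x q(x)\phi_1(x)\phi_2(x)^T$ and its two-step analogue to identify the limits $A,b,C,d$, and then continuity of inversion at the invertible limits. You are in fact slightly more careful than the paper on points it leaves implicit --- the decomposition $d_N = d_N^{(1)} + 2D_N A_N^{-1}b_N$ to handle the nested dependence of $d_N$ on $A_N^{-1}b_N$, the integrability of the per-trajectory sums via the geometric tail of $\tau$, and the convention $\phi_J(x^*)=\phi_M(x^*)=0$ --- but these are refinements within the same argument, not a different approach.
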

The proof involves a straightforward application of the law of large numbers and is described in Appendix \ref{supp:th:LSTD_converges}.

\subsection{An online TD(0) Algorithm}

Our second estimation algorithm is an extension of the well known TD(0) algorithm \cite{SutBar98}. Again, we simulate trajectories of the MDP corresponding to the policy $\pi$ and initial state distribution $\zeta_0$, and we iteratively update our estimates at every visit to the terminal state\footnote{An extension to an algorithm that updates at every state transition is also possible, but we do not pursue such here.}. For some $0\leq t < \tau^{k}$ and weights $w_J,w_M$, we introduce the TD terms
\begin{equation*}
    \begin{split}
        \delta_{J}^k(t,w_J,w_M) =& r(x^k_t) + \left(\phi_J(x^k_{t+1})^T - \phi_J(x^k_t)^T\right) w_J, \\
        \delta_{M}^k(t,w_J,w_M) =& r^2(x^k_t) + 2r(x^k_t)\phi_J(x^k_{t+1})^T w_J \\
        &+ \left(\phi_M(x^k_{t+1})^T - \phi_M(x^k_t)^T\right) w_M.
    \end{split}
\end{equation*}
Note that $\delta_{J}^k$ is the standard TD error \cite{SutBar98}. The TD(0) update is given by
\begin{equation*}
    \begin{split}
      \hat{w}_{J;k+1} &= \hat{w}_{J;k} + \xi_k \sum_{t=0}^{\tau^{k}-1} \phi_J(x_t)\delta_{J}^k(t,\hat{w}_{J;k},\hat{w}_{M;k}), \\
      \hat{w}_{M;k+1} &= \hat{w}_{M;k} + \xi_k \sum_{t=0}^{\tau^{k}-1} \phi_M(x_t)\delta_{M}^k(t,\hat{w}_{J;k},\hat{w}_{M;k}), \\
    \end{split}
\end{equation*}
where $\{\xi_k\}$ are positive step sizes.

The next theorem shows that the TD(0) algorithm converges.
\begin{theorem}\label{th:TD_converges}
Let Assumptions \ref{assumption:proper_policy}, \ref{assumption:Phi_rank}, and \ref{assumption:all_states_visited} hold, and let the step sizes satisfy
\begin{equation*}
    \sum_{k=0}^{\infty}\xi_k = \infty, \quad \sum_{k=0}^{\infty}\xi_k^2 < \infty.
\end{equation*}
Then $\hat{w}_{J;k} \to {w}^*_{J}$ and $\hat{w}_{M;k} \to {w}^*_{M}$ as $k \to \infty$ with probability 1.
\end{theorem}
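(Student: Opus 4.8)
The plan is to recognize the TD(0) recursion as a single iterate of a stochastic approximation scheme and prove convergence via the ODE method. Stacking the two blocks into $\hat{w}_k \triangleq (\hat{w}_{J;k},\hat{w}_{M;k}) \in \R^{l_J + l_M}$, each completed trajectory $k$ produces an update of the form $\hat{w}_{k+1} = \hat{w}_k + \xi_k\left( h(\hat{w}_k) + \omega_k\right)$, where $h(w)$ is the conditional mean of the per-trajectory increment given $\hat{w}_k = w$ and $\omega_k$ is a noise term. Because trajectories are drawn i.i.d.\ from $\zeta_0$ and are conditionally independent of the past given the current weights, $\omega_k$ is a martingale-difference sequence with respect to the natural filtration, which is the structure the ODE method requires.

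First I would compute the mean field $h$. Taking the expectation of $\sum_{t=0}^{\tau-1}\phi_J(x_t)\delta_J^k(t,w_J,w_M)$ over a trajectory and using the state-occupancy interpretation of $Q$ exactly as in the derivation of \eqref{eq:fixed_point_matrix_form2} yields $b - A w_J$; note this block is independent of $w_M$. The same computation for the $M$-block gives $\Phi_M^T Q R\left(r + 2 P \Phi_J w_J\right) - C w_M$. Hence $h$ is affine, $h(w) = g - G w$, with the block lower-triangular matrix
\begin{equation*}
G = \left( \begin{array}{cc} A & 0 \\ -2\Phi_M^T Q R P \Phi_J & C \end{array}\right),
\end{equation*}
and I would check that its unique zero $G^{-1}g$ is exactly the fixed point $(w^*_J,w^*_M)$ of \eqref{eq:fixed_point_matrix_form}--\eqref{eq:fixed_point_matrix_form2}, since $A w^*_J = b$ and $C w^*_M = \Phi_M^T Q R(r + 2 P \Phi_J A^{-1} b) = d$.

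The heart of the argument is to show the associated ODE $\dot{w} = h(w) = -G(w-w^*)$ is globally asymptotically stable, which for affine $h$ is equivalent to $G$ being positive stable (all eigenvalues with strictly positive real part). Because $G$ is block triangular, its spectrum is the union of the spectra of $A$ and $C$, so it suffices to establish that $A$ and $C$ are positive stable. This is the matrix-level content of the contraction Lemma~\ref{lemma:PI_T_contraction} (equivalently, of the joint $\alpha$-norm contraction Lemma~\ref{lemma:PI_T_JM_contraction}): by the standard Tsitsiklis--Van Roy / Bertsekas--Tsitsiklis argument, a projected-operator contraction forces $A = \Phi_J^T Q(I-P)\Phi_J$ and $C = \Phi_M^T Q(I-P)\Phi_M$ to be positive stable, using Assumption~\ref{assumption:Phi_rank} to pass between weight space and function space. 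I would spell out this reduction carefully, as it is the step most likely to hide a subtlety.

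With stability in hand it remains to verify the remaining hypotheses of a standard stochastic-approximation theorem (for instance Bertsekas--Tsitsiklis, \emph{Neuro-Dynamic Programming}, Prop.~4.8, or Borkar's ODE/Borkar--Meyn theorem): the Robbins--Monro step-size conditions are assumed; $h$ is Lipschitz since it is affine; the conditional second moment of $\omega_k$ is bounded by $K(1 + \|\hat{w}_k\|^2)$, which follows from boundedness of $r$ and the features together with finiteness of the moments of $\tau$ under the proper-policy Assumption~\ref{assumption:proper_policy}; and a priori boundedness of the iterates is automatic from the Hurwitz structure of $G$. Invoking the theorem then gives $\hat{w}_k \to w^*$ almost surely. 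I expect the main obstacle to be the coupling between the two blocks: unlike a single TD recursion, $\hat{w}_{M;k}$ is driven by the still-moving $\hat{w}_{J;k}$, and the clean resolution is precisely the triangular-Hurwitz observation above, which lets the single-timescale analysis go through without resorting to a two-timescale argument.
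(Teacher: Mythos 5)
Your proposal is correct and follows essentially the same route as the paper's proof: both cast the stacked update as a single-timescale stochastic approximation with affine mean field, exploit the block lower-triangular structure of the drift matrix (your $-G$ is the paper's $M$) so that stability reduces to the diagonal blocks $A$ and $C$, and conclude via the ODE method with martingale-difference noise across independent trajectories. The only cosmetic differences are that the paper sources positive stability of $A$ and $C$ directly from Lemma 6.10 of Bertsekas--Tsitsiklis (negative definiteness of $\Phi^T Q(P-I)\Phi$) rather than re-deriving it from the contraction lemmas, and formalizes your ``boundedness is automatic from the Hurwitz structure'' step through the Borkar--Meyn scaled-ODE condition (their condition A4 and Theorem 7, Chapter 3 of Borkar), while you additionally verify the noise second-moment growth bound that the paper leaves implicit.
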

The proof, provided in Appendix \ref{supp:th:TD_converges}, is based on representing the TD(0) algorithm as a stochastic approximation and using contraction properties similar to the ones of the previous section to prove convergence.

\subsection{Multistep Algorithms}
A common method in value function approximation is to replace the single step mapping $T_J$ with a multistep version of the form
\begin{equation*}
T^{(\lambda)}_J = (1-\lambda)\sum_{l=0}^{\infty}\lambda^{l}T_J^{l+1}
\end{equation*}
with $0<\lambda<1$. The projected equation \eqref{eq:projected_eq_explicit} then becomes
\begin{equation*}
      \Pi_{J} T^{(\lambda)}_J \left( \Phi_J w^{*(\lambda)}_J \right) = \Phi_J w^{*(\lambda)}_J.
\end{equation*}
Similarly, we may write a multistep equation for $M$
\begin{equation}\label{eq:T_M_Lambda}
      \Pi_{M} T^{(\lambda)}_M \left( \Phi_M w^{*(\lambda)}_M \right) = \Phi_M w^{*(\lambda)}_M,
\end{equation}
where
\begin{equation*}
T^{(\lambda)}_M = (1-\lambda)\sum_{l=0}^{\infty}\lambda^{l}T_{M^*}^{l+1},
\end{equation*}
and
\begin{equation*}
  T_{M^*} \left( y \right) = Rr + 2RP \Phi_J w^{*(\lambda)}_J + P y.
\end{equation*}
Note the difference between $T_{M^*}$ and $T_M$ defined earlier; We are no longer working on the joint space of $J$ and $M$ but instead we have an independent equation for approximating $J$, and its solution $w^{*(\lambda)}_J$ is part of equation \eqref{eq:T_M_Lambda} for approximating $M$.
By Proposition 7.1.1. of \cite{Ber2012DynamicProgramming} both $\Pi_{J} T^{(\lambda)}_J$ and $\Pi_{M} T^{(\lambda)}_M$ are contractions with respect to the weighted norm $\|\cdot\|_q$, therefore both multistep projected equations admit a unique solution. In a similar manner to the single step version, the projected equations may be written in matrix form
\begin{equation}\label{eq:TD_Lambda_solution}
    \begin{split}
      A^{(\lambda)} w^{*(\lambda)}_J &= b^{(\lambda)}, \\
      C^{(\lambda)} w^{*(\lambda)}_M &= d^{(\lambda)}, \\
    \end{split}
\end{equation}
where
\begin{equation*}
    \begin{split}
    A^{(\lambda)} &= \Phi_J^T Q \left( I - P^{(\lambda)} \right) \Phi_J, \quad b^{(\lambda)} = \Phi_J^T Q (I-\lambda P)^{-1}r, \\
    C^{(\lambda)} &= \Phi_M^T Q \left( I - P^{(\lambda)} \right) \Phi_M, \\
    d^{(\lambda)} &= \Phi_M^T Q (I-\lambda P)^{-1} R \left( r + 2P \Phi_J w^{*(\lambda)}_J \right),
    \end{split}
\end{equation*}
and
\begin{equation*}
    P^{(\lambda)} = (1-\lambda)\sum_{l=0}^{\infty}\lambda^{l}P^{l+1}.\\
\end{equation*}
Simulation based estimates $A^{(\lambda)}_N$ and $b^{(\lambda)}_N$ of the expressions above may be obtained by the use of eligibility traces, as described in \cite{Ber2012DynamicProgramming}, and the LSTD($\lambda$) approximation is then given by $\hat{w}^{*(\lambda)}_{J} = (A^{(\lambda)}_N)^{-1}b^{(\lambda)}_N$. By substituting $w^{*(\lambda)}_{J}$ with $\hat{w}^{*(\lambda)}_{J}$ in the expression for $d^{(\lambda)}$, a similar procedure may be used to derive estimates $C^{(\lambda)}_N$ and $d^{(\lambda)}_N$, and to obtain the LSTD($\lambda$) approximation $\hat{w}^{*(\lambda)}_{M} = (C^{(\lambda)}_N)^{-1}d^{(\lambda)}_N$. Due to the similarity to the LSTD procedure in \eqref{eq:sampled_AbCd}, the exact details are omitted.

\section{Positive Variance as a Constraint in LSTD}\label{sec:positive_var}
The TD algorithms of the preceding section approximated $J$ and $M$ by the solution to the fixed point equation \eqref{eq:PI_T_JM_Fixed_point}. While Lemma \ref{lemma:PI_T_JM_error} provides us a bound on the approximation error of $\tilde{J}$ and $\tilde{M}$ measured in the $\alpha$-weighted norm, it does not guarantee that the approximated variance $\tilde{V}$, given by $\tilde{M}-\tilde{J}^2$, is positive for all states. If we are estimating $M$ as a means to infer $V$, it may be useful to include our prior knowledge that $V\geq0$ in the estimation process. In this section we propose to enforce this knowledge as a constraint in the projected fixed point equation.

The multistep equation for the second moment weights \eqref{eq:T_M_Lambda} may be written with the projection operator as an explicit minimization
\begin{equation*}
      w^{*(\lambda)}_M =
      \arg \min_w \|\Phi_M w - \left( \tilde{r} + \tilde{\Phi} w^{*(\lambda)}_M \right) \|_q ,
\end{equation*}
with
\begin{equation*}
\tilde{\Phi} = P^{(\lambda)} \Phi_M,
\end{equation*}
and
\begin{equation*}
\tilde{r} = (I-\lambda P)^{-1}\left(Rr + 2R P \Phi_J w^{*(\lambda)}_J \right).
\end{equation*}
Requiring non negative variance in some state $x$ may be written as a linear constraint in $w^{*(\lambda)}_M$
\begin{equation*}
      \phi_M(x)^T w^{*(\lambda)}_M - (\phi_J(x)^T w^{*(\lambda)}_J)^2 \geq 0.
\end{equation*}
Let $\{ x_1,\dots,x_l\}$ denote a set of states in which we demand that the variance be non negative. Let $H\in\R^{l \times l_M}$ denote a matrix with the features $-\phi_M^T(x_i)$ as its rows, and let $g\in\R^l$ denote a vector with elements $-(\phi_J(x_i)^T w^{*(\lambda)}_J)^2$.
We can write the variance-constrained projected equation for the second moment as
\begin{equation}\label{eq:constrained_projected_eq}
      w^{vc}_M = \begin{cases}
      \arg \min_w& \|\Phi_M w - \left( \tilde{r} + \tilde{\Phi} w^{vc}_M \right) \|_q \\
      \textrm{s.t.}& Hw \leq g
      \end{cases}
\end{equation}
The following assumption guarantees that the constraints in \eqref{eq:constrained_projected_eq} admit a feasible solution.
\begin{assumption}\label{assumption:feasible_solution}
There exists $w$ such that $Hw < g$.
\end{assumption}
Note that a simple way to satisfy Assumption \ref{assumption:feasible_solution} is to have some feature vector that is positive for all states.
Equation \eqref{eq:constrained_projected_eq} is a form of projected equation studied in \cite{bertsekas2011temporal}, the solution of which may be obtained by the following iterative procedure
\begin{equation}\label{eq:VI_algorithm}
    w_{k+1} = \Pi_{\Xi,\hat{W}_M} [w_k - \gamma \Xi^{-1} (C^{(\lambda)} w_k - d^{(\lambda)}) ],
\end{equation}
where $\Xi$ is some positive definite matrix, and $\Pi_{\Xi,\hat{W}_M}$ denotes a projection onto the convex set $\hat{W}_M = \{ w | Hw \leq g \}$ with respect to the $\Xi$ weighted Euclidean norm.
The following lemma, which is based on a convergence result of \cite{bertsekas2011temporal}, guarantees that algorithm \eqref{eq:VI_algorithm} converges.
\begin{lemma}
Assume $\lambda>0$. Then there exists $\bar{\gamma}>0$ such that $\forall \gamma \in (0,\bar{\gamma})$ the algorithm \eqref{eq:VI_algorithm} converges at a linear rate to $w^{vc}_M$.
\end{lemma}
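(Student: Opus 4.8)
The plan is to recognize \eqref{eq:constrained_projected_eq} as a variational inequality over the convex polyhedron $\hat{W}_M = \{ w \mid Hw \le g \}$, to recognize \eqref{eq:VI_algorithm} as the associated projection (projected-gradient) iteration, and then to invoke the convergence theorem of \cite{bertsekas2011temporal} after verifying its hypotheses in our setting. First I would rewrite the projection defining \eqref{eq:constrained_projected_eq} through its orthogonality characterization to show that $w^{vc}_M$ solves \eqref{eq:constrained_projected_eq} if and only if it solves
\begin{equation*}
(w - w^{vc}_M)^T \left( C^{(\lambda)} w^{vc}_M - d^{(\lambda)} \right) \ge 0, \qquad \forall w \in \hat{W}_M .
\end{equation*}
This identifies $w^{vc}_M$ as a fixed point of the map $F_\gamma(w) = \Pi_{\Xi,\hat{W}_M}[\, w - \gamma \Xi^{-1}(C^{(\lambda)} w - d^{(\lambda)}) \,]$ for every $\gamma > 0$, so it suffices to show that for small $\gamma$ the map $F_\gamma$ is a contraction with a unique fixed point.

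The main step, and the place where the hypothesis $\lambda > 0$ is essential, is to establish that $C^{(\lambda)}$ is positive definite, which is the monotonicity condition underlying the analysis of \cite{bertsekas2011temporal}. Writing $C^{(\lambda)} = \Phi_M^T Q (I - P^{(\lambda)}) \Phi_M$, setting $u = \Phi_M w$, and writing $\langle u,v\rangle_Q = u^T Q v$ for the $Q$-inner product, one computes
\begin{equation*}
w^T C^{(\lambda)} w = \|u\|_q^2 - \langle u, P^{(\lambda)} u \rangle_Q .
\end{equation*}
Since $u \in S_M$ and $\Pi_M$ is self-adjoint with respect to $\langle\cdot,\cdot\rangle_Q$, we have $\langle u, P^{(\lambda)} u\rangle_Q = \langle u, \Pi_M P^{(\lambda)} u\rangle_Q$, and Cauchy--Schwarz gives $\langle u, \Pi_M P^{(\lambda)} u\rangle_Q \le \|u\|_q \|\Pi_M P^{(\lambda)} u\|_q$. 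Because $\lambda > 0$, the operator $\Pi_M T^{(\lambda)}_M$, and hence $\Pi_M P^{(\lambda)}$, is a contraction in the $q$-weighted Euclidean norm (the fact cited after \eqref{eq:T_M_Lambda}, via Proposition 7.1.1 of \cite{Ber2012DynamicProgramming}), so $\|\Pi_M P^{(\lambda)} u\|_q \le \beta \|u\|_q$ for some $\beta < 1$. Combining these, $w^T C^{(\lambda)} w \ge (1-\beta) \|u\|_q^2 > 0$ for all $w \ne 0$, where Assumption \ref{assumption:Phi_rank} ensures $u = \Phi_M w \ne 0$. Thus the symmetric part of $C^{(\lambda)}$ is positive definite. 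This is exactly the step that breaks for $\lambda = 0$, where only the weaker contraction of Lemma \ref{lemma:PI_T_contraction}, measured in a non-Euclidean norm, is available.

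Given positive definiteness, I would finish with a standard contraction estimate in the $\Xi$-weighted Euclidean norm $\|\cdot\|_\Xi$. The projection $\Pi_{\Xi,\hat{W}_M}$ is nonexpansive in $\|\cdot\|_\Xi$, so it is enough to contract the affine map $w \mapsto w - \gamma \Xi^{-1}(C^{(\lambda)} w - d^{(\lambda)})$. Expanding, for $e = w - w'$,
\begin{equation*}
\| e - \gamma \Xi^{-1} C^{(\lambda)} e \|_\Xi^2 = \|e\|_\Xi^2 - 2\gamma\, e^T \tfrac{1}{2}\!\left( C^{(\lambda)} + (C^{(\lambda)})^T \right) e + \gamma^2 \| C^{(\lambda)} e \|_{\Xi^{-1}}^2 .
\end{equation*}
Positive definiteness of the symmetric part of $C^{(\lambda)}$ together with positive definiteness of $\Xi$ yields $e^T \tfrac{1}{2}(C^{(\lambda)} + (C^{(\lambda)})^T) e \ge \mu \|e\|_\Xi^2$ for some $\mu > 0$, and $\|C^{(\lambda)} e\|_{\Xi^{-1}}^2 \le L^2 \|e\|_\Xi^2$ for some finite $L$, giving the bound $(1 - 2\gamma\mu + \gamma^2 L^2)\|e\|_\Xi^2$. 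Choosing $\bar\gamma = 2\mu / L^2$ makes the modulus strictly below one for every $\gamma \in (0,\bar\gamma)$, so $F_\gamma$ is a $\|\cdot\|_\Xi$-contraction; the Banach fixed point theorem then gives a unique fixed point, which by the first paragraph is $w^{vc}_M$, and convergence $w_k \to w^{vc}_M$ at a linear rate. The principal obstacle is the positive-definiteness step of the second paragraph: converting the $q$-norm contraction of $\Pi_M P^{(\lambda)}$ into strict positivity of the quadratic form $w^T C^{(\lambda)} w$, and confirming that this is precisely the monotonicity hypothesis under which \cite{bertsekas2011temporal} establishes linear convergence of \eqref{eq:VI_algorithm}.
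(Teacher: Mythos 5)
Your proposal is correct, but it takes a genuinely more self-contained route than the paper. The paper's proof is two sentences: it invokes the convergence theorem of \cite{bertsekas2011temporal} as a black box, observing that the only nontrivial hypothesis is that $T^{(\lambda)}_M$ be a contraction in the $\|\cdot\|_q$ norm, which for $\lambda>0$ follows from Proposition 7.1.1 of \cite{Ber2012DynamicProgramming}. You instead unpack what that cited theorem actually does: you recast \eqref{eq:constrained_projected_eq} as the variational inequality $(w-w^{vc}_M)^T(C^{(\lambda)}w^{vc}_M-d^{(\lambda)})\geq 0$ on $\hat{W}_M$ (valid, since $d^{(\lambda)}=\Phi_M^T Q\tilde{r}$ and $\tilde{\Phi}=P^{(\lambda)}\Phi_M$, so the first-order optimality condition of the projection is exactly this inequality), convert the $q$-norm contraction of $\Pi_M P^{(\lambda)}$ (the linear part of $\Pi_M T^{(\lambda)}_M$) into positive definiteness of the symmetric part of $C^{(\lambda)}$ via self-adjointness of $\Pi_M$ in the $Q$-inner product plus Cauchy--Schwarz, and then run the standard projected-iteration estimate $1-2\gamma\mu+\gamma^2 L^2<1$ for $\gamma<\bar{\gamma}=2\mu/L^2$, finishing with Banach's fixed point theorem. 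Your positive-definiteness step is precisely the strong-monotonicity condition on which the analysis of \cite{bertsekas2011temporal} rests, and you correctly identify it as the place where $\lambda>0$ is essential: for $\lambda=0$ only the contraction of Lemma \ref{lemma:PI_T_contraction} in a non-Euclidean norm is available, and the quadratic-form argument collapses. What your route buys: an explicit stepsize threshold and rate modulus, plus a proof that $w^{vc}_M$ exists and is unique (the paper's statement tacitly presupposes this, as the cited theorem supplies it). What the paper's route buys: brevity, with the general-$\Xi$ bookkeeping delegated to the reference. Two points you leave implicit but should state: $\hat{W}_M$ must be nonempty (and closed convex) for $\Pi_{\Xi,\hat{W}_M}$ to be well defined, which is supplied by Assumption \ref{assumption:feasible_solution}; and Assumptions \ref{assumption:Phi_rank} and \ref{assumption:all_states_visited} are needed so that $\|\cdot\|_q$ is a genuine norm and $u=\Phi_M w\neq 0$ for $w\neq 0$. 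Neither is a gap, just missing citations of hypotheses.
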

\begin{proof}
This is a direct application of the convergence result in \cite{bertsekas2011temporal}. The only nontrivial assumption that needs to be verified is that $T^{(\lambda)}_M$ is a contraction in the $\|\cdot\|_q$ norm (Proposition 1 in \citealp{bertsekas2011temporal}). For $\lambda > 0$ Proposition 7.1.1. of \cite{Ber2012DynamicProgramming} guarantees  that $T^{(\lambda)}_M$ is indeed contracting in the $\|\cdot\|_q$ norm.
\end{proof}

We illustrate the effect of the positive variance constraint in a simple example. Consider the Markov chain depicted in Figure \ref{fig:chain_MDP}, which consists of $N$ states with reward $-1$ and a terminal state $x^*$ with zero reward. The transitions from each state is either to a subsequent state (with probability $p$) or to a preceding state (with probability $1-p$), with the exception of the first state which transitions to itself instead. We chose to approximate $J$ and $M$ with polynomials of degree 1 and 2, respectively. For such a small problem the fixed point equation \eqref{eq:TD_Lambda_solution} may be solved exactly, yielding the approximation depicted in Figure \ref{fig:chain_JMV} (dotted line), for $p=0.7$, $N=30$, and $\lambda=0.95$. Note that the variance is negative for the last two states. Using algorithm \eqref{eq:VI_algorithm} we obtained a positive variance constrained approximation, which is depicted in figure \ref{fig:chain_JMV} (dashed line). Note that the variance is now positive for all states (as was required by the constraints).
\begin{figure}[h]
\begin{center}
\includegraphics[scale=0.3]{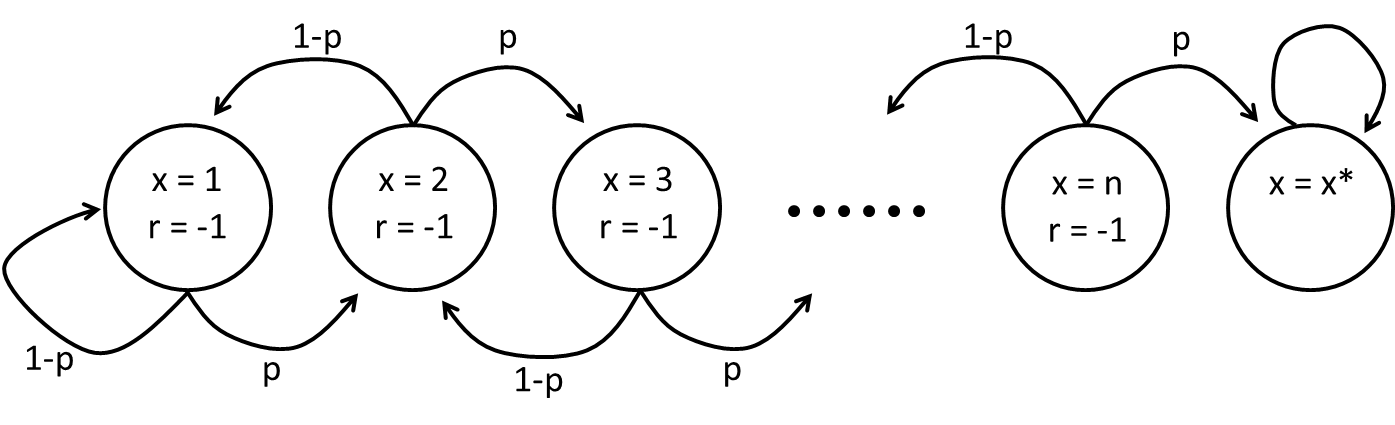}
\end{center}
\caption{A Markov chain\label{fig:chain_MDP}   }
\end{figure}

\begin{figure*}[t]
\begin{center}
\includegraphics[scale=0.35, trim=110 250 110 250, clip=true]{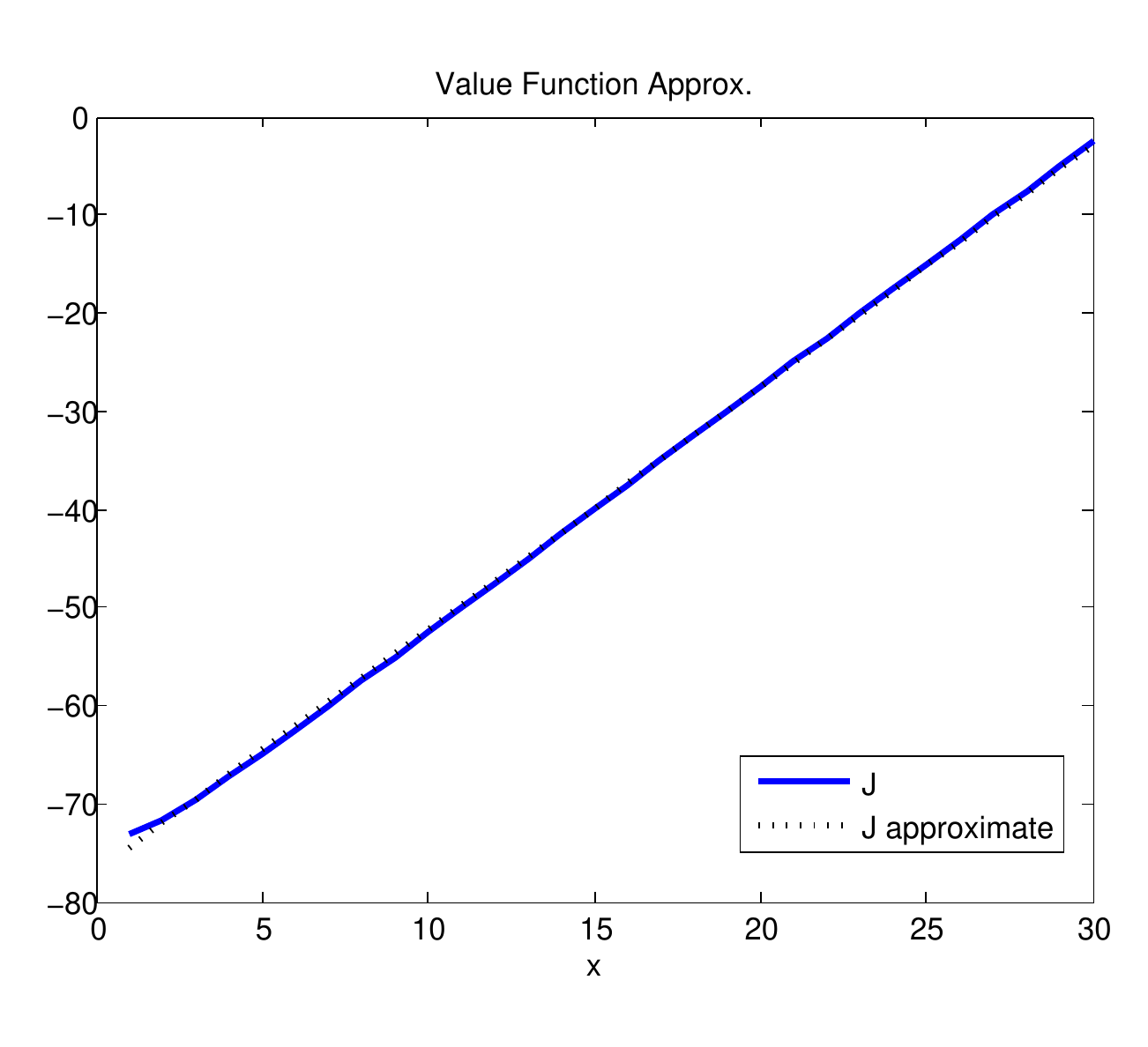}
\includegraphics[scale=0.35, trim=110 250 120 250, clip=true]{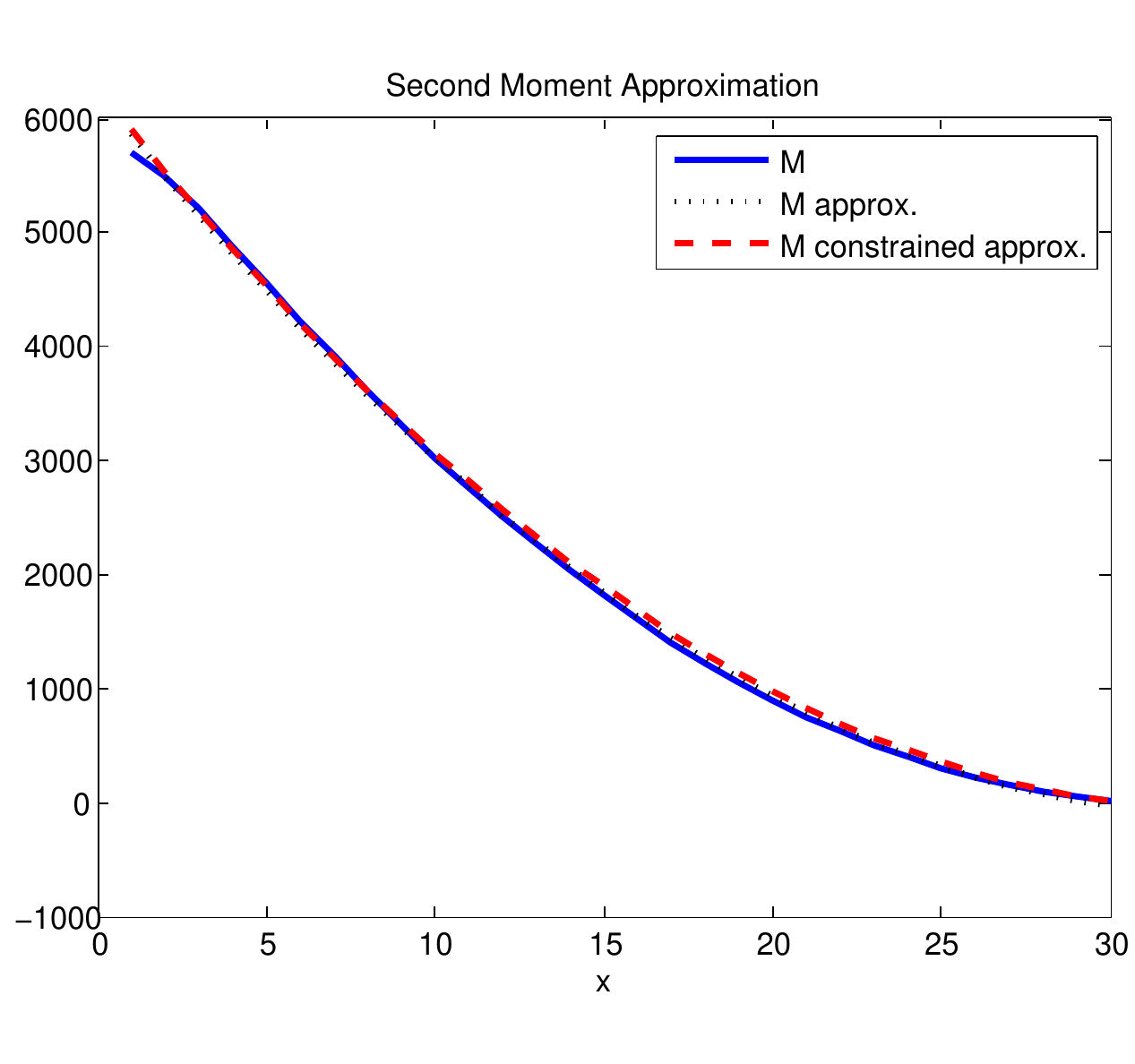}
\includegraphics[scale=0.35, trim=120 250 120 250, clip=true]{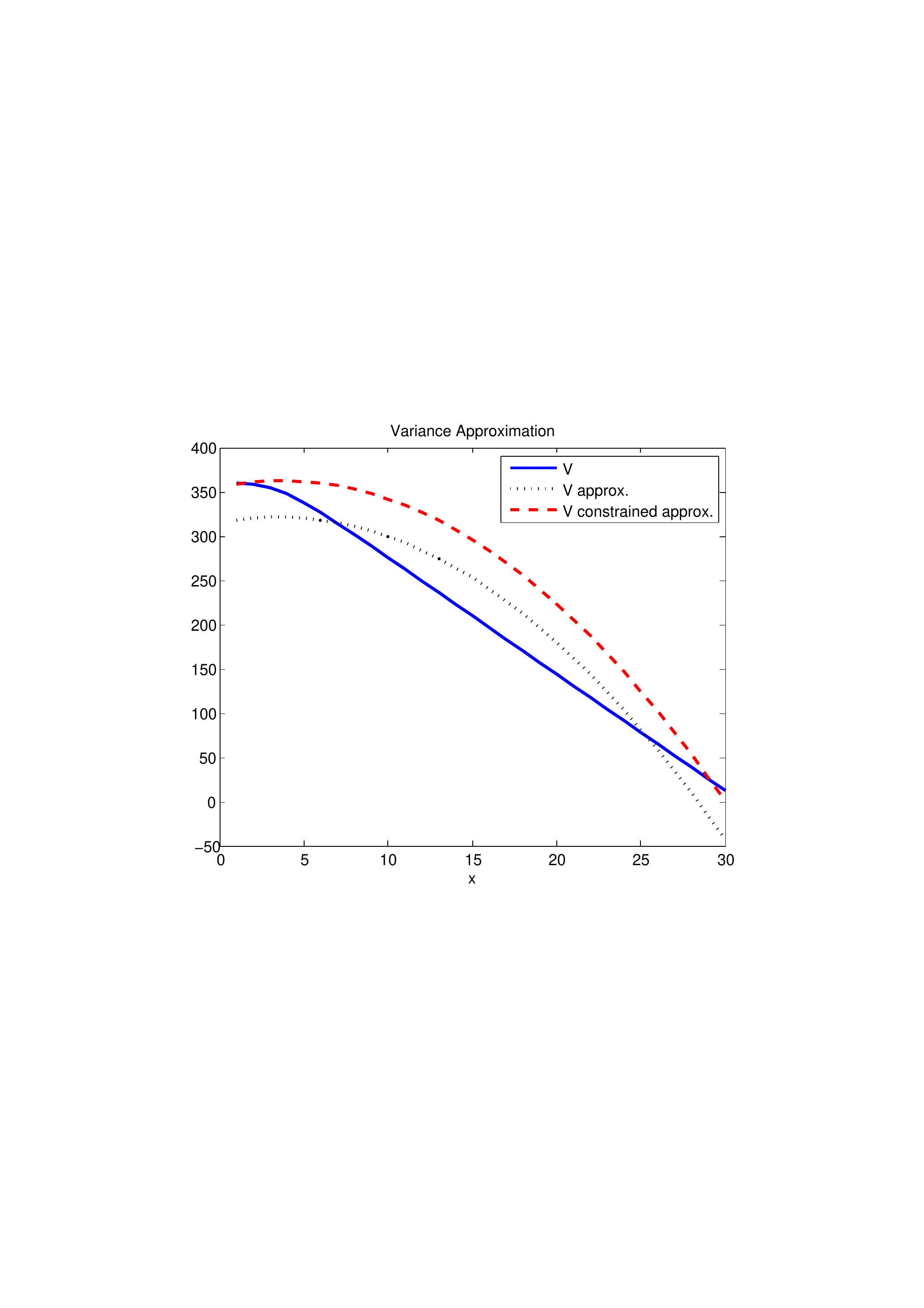}
\caption{\label{fig:chain_JMV}Value, second moment and variance approximation}
\end{center}
\end{figure*}

\section{Experiments}\label{sec:experiments}
In this section we present numerical simulations of policy evaluation on a challenging continuous maze domain. The goal of this presentation is twofold; first, we show that the variance function may be estimated successfully on a large domain using a reasonable amount of samples. Second, the intuitive maze domain highlights the information that may be gleaned from the variance function.
We begin by describing the domain and then present our policy evaluation results.

The Pinball Domain \citep{KonidarisSkillChaining09} is a continuous 2-dimensional maze where a small ball needs to be maneuvered between obstacles to reach some target area, as depicted in figure \ref{fig:pinball_domain} (left). The ball is controlled by applying a constant force in one of the 4 directions at each time step, which causes acceleration in the respective direction. In addition, the ball's velocity is susceptible to additive Gaussian noise (zero mean, standard deviation 0.03) and friction (drag coefficient 0.995). The state of the ball is thus 4-dimensional ($x,y,\dot{x},\dot{y}$), and the action set is discrete, with 4 available controls. The obstacles are sharply shaped and fully elastic, and collisions cause the ball to bounce. As noted in \cite{KonidarisSkillChaining09}, the sharp obstacles and continuous dynamics make the pinball domain more challenging for RL than simple navigation tasks or typical benchmarks like Acrobot.

A Java implementation of the pinball domain used in \citep{KonidarisSkillChaining09} is available on-line \footnote{http://people.csail.mit.edu/gdk/software.html} and was used for our simulations as well, with the addition of noise to the velocity.

We obtained a near-optimal policy using SARSA \citep{SutBar98} with radial basis function features and a reward of -1 for all states until reaching the target. The value function for this policy is plotted in Figure \ref{fig:pinball_domain}, for states with zero velocity. As should be expected, the value is approximately a linear function of the distance to the target.

Using 3000 trajectories (starting from uniformly distributed random states in the maze) we estimated the value and second moment functions by the LSTD($\lambda$) algorithm described above. We used uniform tile coding as features ($50\times 50$ non-overlapping tiles in $x$ and $y$, no dependence on velocity) and set $\lambda=0.9$. The resulting estimated standard deviation function is shown in Figure \ref{fig:pinball_variance} (left). In comparison, the standard deviation function shown in Figure \ref{fig:pinball_variance} (right) was estimated by the naive sample variance, and required 500 trajectories from each point - a total of 1,250,000 trajectories.

Note that the variance function is clearly not a linear function of the distance to the target, and in some places not even monotone. Furthermore, we see that an area in the top part of the maze before the first turn is very risky, even more than the farthest point from the target. We stress that this information cannot be gleaned from inspecting the value function alone.

\begin{figure*}
\begin{center}
\includegraphics[scale=0.29 , trim=0 -40 0 0, clip=true]{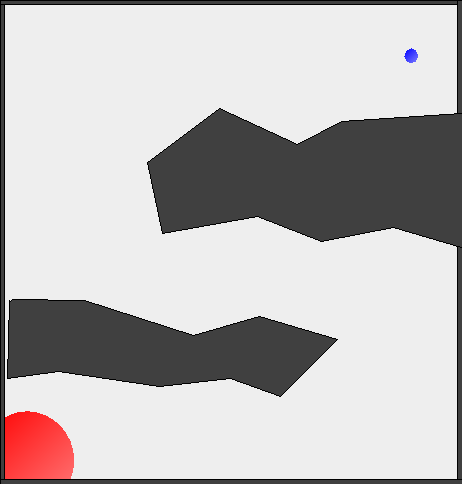}
\includegraphics[scale=0.4]{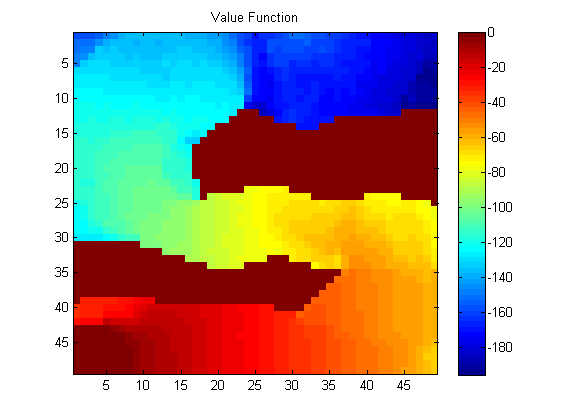}
\caption{\label{fig:pinball_domain}The pinball domain}
\end{center}
\end{figure*}

\begin{figure*}
\begin{center}
\includegraphics[scale=0.4]{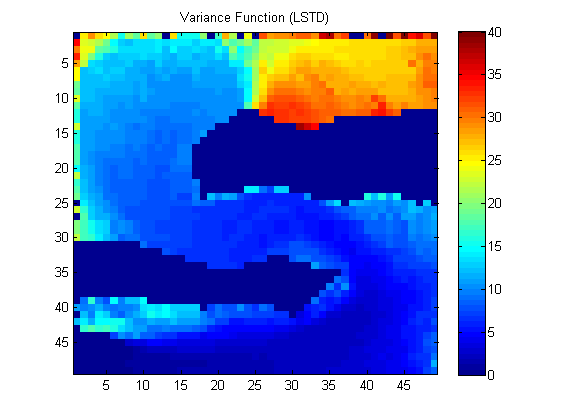}
\includegraphics[scale=0.4]{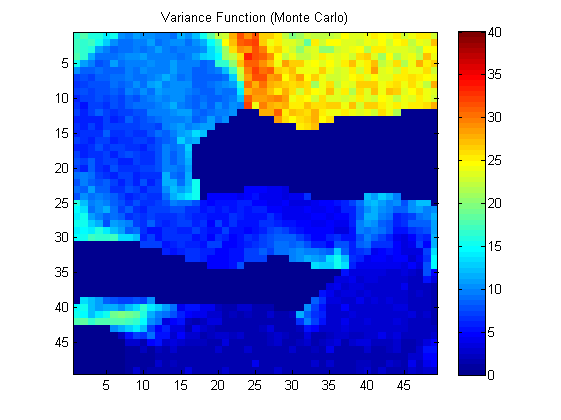}
\caption{\label{fig:pinball_variance}Standard Deviation of Reward To Go}
\end{center}
\end{figure*}

\section{Conclusion}\label{sec:conclusion}
This work presented a novel framework for policy evaluation in RL with variance related performance criteria. We presented both formal guarantees and empirical evidence that this approach is useful in problems with a large state space.

A few issues are in need of further investigation. First, we note a possible extension to other risk measures such as the percentile criterion \cite{journals/ior/DelageM10}. In a recent work, \citet{morimura2012parametric} derived Bellman equations for the \emph{distribution} of the total return, and appropriate TD learning rules were proposed, albeit without function approximation and formal guarantees.

More importantly, at the moment it remains unclear how the variance function may be used for \emph{policy optimization}. While a naive policy improvement step may be performed, its usefulness should be questioned, as it was shown to be problematic for the standard deviation adjusted reward \cite{sobel1982variance} and the variance constrained reward \cite{mannor2011mean}. In \cite{Tamar2012mean_var}, a policy gradient approach was proposed for handling variance related criteria, which may be extended to an actor-critic method by using the variance function presented here.

\bibliography{VarTD_ICML2013}
\bibliographystyle{icml2013}

\newpage
\appendix
\onecolumn
\textbf{Supplementary Material}
\section{Proof of Proposition \ref{prop:equations}\label{supp:prop:equations}}
\begin{proof}
The equation for $J(x)$ is well-known, and its proof is given here only for completeness.
Choose $x\in X$. Then,
\begin{equation*}
\begin{split}
  J(x) &= \E\left[B | x_{0}=x\right] \\
    &= \E\left[\sum_{k=0}^{\tau-1}r(x_k) | x_{0}=x\right] \\
    &= r(x) + \E\left[\sum_{k=1}^{\tau-1}r(x_k) | x_{0}=x\right] \\
    &= r(x) + \E\left[\E\left[\sum_{k=1}^{\tau-1}r(x_k) | x_{0}=x,x_{1}=y\right]\right] \\
    &= r(x) + \sum_{y\in X} P(y|x) J(y)
\end{split}
\end{equation*}
where we excluded the terminal state from the sum since reaching it ends the trajectory.

Similarly,
\begin{equation*}
\begin{split}
  M(x) &= \E\left[B^2 | x_{0}=x\right] \\
    &= \E\left[\left( \sum_{k=0}^{\tau-1}r(x_k) \right)^2 | x_{0}=x\right] \\
    &= \E\left[\left( r(x_0) + \sum_{k=1}^{\tau-1}r(x_k) \right)^2 | x_{0}=x\right] \\
    &= r(x)^2 + 2r(x)\E\left[\sum_{k=1}^{\tau-1}r(x_k) | x_{0}=x\right] + \E\left[\left( \sum_{k=1}^{\tau-1}r(x_k) \right)^2 | x_{0}=x\right] \\
    &= r(x)^2 + 2r(x) \sum_{y\in X}P(y|x) J(y)+ \sum_{y\in X}P(y|x) M(y).
\end{split}
\end{equation*}

The uniqueness of the value function $J$ for a proper policy is well known, c.f. proposition 3.2.1 in \cite{Ber2012DynamicProgramming}. The uniqueness of $M$ follows by observing that in the equation for $M$, $M$ may be seen as the value function of an MDP with the same transitions but with reward $r(x)^2 + 2r(x) \sum_{y\in X}P(y|x) J(y)$. Since only the rewards change, the policy remains proper and proposition 3.2.1 in \cite{Ber2012DynamicProgramming} applies.
\end{proof}
\section{Proof of Lemma \ref{lemma:PI_T_JM_error}\label{supp:lemma:PI_T_JM_error}}
\begin{proof}
We have
\begin{equation*}
\begin{split}
  \|z_{true} - z^*\|_{\alpha} &{\leq} \|z_{true} - \Pi z_{true}\|_{\alpha} + \|\Pi z_{true} - z^*\|_{\alpha} \\
  &= \|z_{true} - \Pi z_{true}\|_{\alpha} + \|\Pi T z_{true} - \Pi T z^*\|_{\alpha} \\
  &{\leq} \|z_{true} - \Pi z_{true}\|_{\alpha} + \beta\|z_{true} - z^*\|_{\alpha}.
\end{split}
\end{equation*}
rearranging gives the stated result.
\end{proof}

\section{Proof of Theorem \ref{th:LSTD_converges}\label{supp:th:LSTD_converges}}
\begin{proof}
Let $\phi_1(x)$, $\phi_2(x)$ be some vector functions of the state. We claim that
\begin{equation}\label{eq:LSTD_proof_1}
    \mathbb{E} \left[ \sum_{t=0}^{\tau-1} \phi_1(x_t) \phi_2(x_t)^T \right] = \sum_{x} q(x) \phi_1(x) \phi_2(x)^T.
\end{equation}
To see this, let $\mathbbm{1}(\cdot)$ denote the indicator function and write
\begin{equation*}
    \begin{split}
    \mathbb{E} \left[ \sum_{t=0}^{\tau-1} \phi_1(x_t) \phi_2(x_t)^T \right] &= \mathbb{E} \left[ \sum_{t=0}^{\tau-1} \sum_x \phi_1(x) \phi_2(x)^T \mathbbm{1}(x_t = x) \right] \\
    &= \mathbb{E} \left[ \sum_x \phi_1(x) \phi_2(x)^T \sum_{t=0}^{\tau-1} \mathbbm{1}(x_t = x) \right] \\
    &= \sum_x \phi_1(x) \phi_2(x)^T \mathbb{E} \left[ \sum_{t=0}^{\tau-1} \mathbbm{1}(x_t = x) \right].
    \end{split}
\end{equation*}
Now, note that the last term on the right hand side is an expectation (over all possible trajectories) of the number of visits to a state $x$ until reaching the terminal state, which is exactly $q(x)$ since
\begin{equation*}
    \begin{split}
    q(x) &= \sum_{t=0}^{\infty} P(x_t = x) \\
         &= \sum_{t=0}^{\infty} \mathbb{E}[\mathbbm{1}(x_t = x)] \\
         &= \mathbb{E} \left[ \sum_{t=0}^{\infty} \mathbbm{1}(x_t = x) \right] \\
         &= \mathbb{E} \left[ \sum_{t=0}^{\tau-1} \mathbbm{1}(x_t = x) \right],
    \end{split}
\end{equation*}
where the last equality follows from the absorbing property of the terminal state.
Similarly, we have
\begin{equation}\label{eq:LSTD_proof_2}
    \mathbb{E} \left[ \sum_{t=0}^{\tau-1} \phi_1(x_t) \phi_2(x_{t+1})^T \right] = \sum_{x}\sum_{y} q(x) P(y|x) \phi_1(x) \phi_2(y)^T,
\end{equation}
since
\begin{equation*}
    \begin{split}
    \mathbb{E} \left[ \sum_{t=0}^{\tau-1} \phi_1(x_t) \phi_2(x_{t+1})^T \right] &= \mathbb{E} \left[ \sum_{t=0}^{\tau-1} \sum_x \sum_y \phi_1(x) \phi_2(y)^T \mathbbm{1}(x_t = x,x_{t+1} = y) \right] \\
    &= \mathbb{E} \left[ \sum_x \sum_y \phi_1(x) \phi_2(y)^T \sum_{t=0}^{\tau-1} \mathbbm{1}(x_t = x,x_{t+1} = y) \right] \\
    &= \sum_x \sum_y \phi_1(x) \phi_2(y)^T \mathbb{E} \left[ \sum_{t=0}^{\tau-1} \mathbbm{1}(x_t = x,x_{t+1} = y) \right]
    \end{split}
\end{equation*}
and
\begin{equation*}
    \begin{split}
    q(x)P(y|x) &= \sum_{t=0}^{\infty} P(x_t = x)P(y|x) \\
         &= \sum_{t=0}^{\infty} P(x_t = x, x_{t+1} = y) \\
         &= \sum_{t=0}^{\infty} \mathbb{E}[\mathbbm{1}(x_t = x, x_{t+1} = y)] \\
         &= \mathbb{E} \left[ \sum_{t=0}^{\infty} \mathbbm{1}(x_t = x, x_{t+1} = y) \right] \\
         &= \mathbb{E} \left[ \sum_{t=0}^{\tau-1} \mathbbm{1}(x_t = x, x_{t+1} = y) \right]. \\
    \end{split}
\end{equation*}
Since trajectories between visits to the recurrent state are statistically independent, the law of large numbers together with the expressions in \eqref{eq:LSTD_proof_1} and \eqref{eq:LSTD_proof_2} suggest that the approximate expressions in \eqref{eq:sampled_AbCd} converge to their expected values with probability 1, therefore we have
\begin{equation*}
    \begin{split}
    A_N &{\to} A, \quad b_N {\to} b, \\
    C_N &{\to} C, \quad d_N {\to} D,
    \end{split}
\end{equation*}
and
\begin{equation*}
    \begin{split}
      \hat{w}^*_{J;N} &= A_N^{-1}b_N {\to} A^{-1}b = {w}^*_{J}, \\
      \hat{w}^*_{M;N} &= C_N^{-1}d_N {\to} C^{-1}d = {w}^*_{M}. \\
    \end{split}
\end{equation*}
\end{proof}


\section{Proof of Theorem \ref{th:TD_converges}\label{supp:th:TD_converges}}
\begin{proof}
Using \eqref{eq:LSTD_proof_1} and \eqref{eq:LSTD_proof_2} we have for all $k$
\begin{equation}\label{eq:TD_proof_1}
    \begin{split}
      \mathbb{E}\left[\sum_{t=0}^{\tau^{k}-1} \phi_J(x_t)\delta_{J}^k(t,w_J,w_M)\right] &= \Phi_J^T Q r - \Phi_J^T Q \left( I - P \right) \Phi_J w_{J}, \\
      \mathbb{E}\left[\sum_{t=0}^{\tau^{k}-1} \phi_M(x_t)\delta_{M}^k(t,w_J,w_M)\right] &= \Phi_M^T Q R \left( r + 2P \Phi_J w_{J} \right) - \Phi_M^T Q \left( I - P \right) \Phi_M w_{M},
    \end{split}
\end{equation}
Letting $\hat{w}_k = (\hat{w}_{J;k}, \hat{w}_{M;k})$ denote a concatenated weight vector in the joint space $\mathbb{R}^{S_J}\times\mathbb{R}^{S_M}$ we can write the TD algorithm in a stochastic approximation form as
\begin{equation}\label{eq:SA_iterates}
      \hat{w}_{k+1} = \hat{w}_{k} + \xi_k \left(z + M \hat{w}_{k} + \delta M_{k+1}\right),
\end{equation}
where
\begin{equation*}
    M = \left(
                 \begin{array}{cc}
                   \Phi_J^T Q \left( P - I \right) \Phi_J & 0 \\
                   2\Phi_M^T Q R P \Phi_J & \Phi_M^T Q \left( P - I \right) \Phi_M \\
                 \end{array}
               \right),
\end{equation*}
\begin{equation*}
    z = \left(
                 \begin{array}{cc}
                   \Phi_J^T Q r \\
                   \Phi_M^T Q R r \\
                 \end{array}
               \right),
\end{equation*}
and the noise terms $\delta M_{k+1}$ satisfy
\begin{equation*}
\E\left[\delta M_{k+1}| F_n\right] = 0,
\end{equation*}
where $F_n$ is the filtration $F_n = \sigma(\hat{w}_{m},\delta M_m,m\leq n)$, since different trajectories are independent.

We first claim that the eigenvalues of $M$ have a negative real part. To see this, observe that $M$ is block triangular, and its eigenvalues are just the eigenvalues of $\Phi_J^T Q \left( P - I \right) \Phi_J$ and $\Phi_M^T Q \left( P - I \right) \Phi_M$. By Lemma 6.10 in \cite{BT96} these matrices are negative definite. It therefore follows (see \citealp{Ber2012DynamicProgramming} example 6.6) that their eigenvalues have a negative real part. Thus, the eigenvalues of $M$ have a negative real part.

Next, let $h(w) = Mw+z$, and observe that the following conditions hold.
\begin{condition}\label{cond:a1}
The map $h$ is Lipschitz.
\end{condition}
\begin{condition}\label{cond:a2}
The step sizes satisfy
\begin{equation*}
    \sum_{k=0}^{\infty}\xi_k = \infty, \quad \sum_{k=0}^{\infty}\xi_k^2 < \infty.
\end{equation*}
\end{condition}
\begin{condition}\label{cond:a3}
$\{\delta M_n\}$ is a martingale difference sequence, i.e., $\E \left[\delta M_{n+1} | F_n\right] = 0$.
\end{condition}
The next condition also holds
\begin{condition}\label{cond:a4}
The functions $h_c(w) \triangleq h(cw)/c, c \geq 1$ satisfy $h_c(w) \to h_\infty(w)$ as $c \to \infty$, uniformly on compacts, and $h_\infty(w)$ is continuous. Furthermore, the Ordinary Differential Equation (ODE)
\begin{equation*}
    \dot{w}(t) = h_\infty(w(t))
\end{equation*}
has the origin as its unique globally asymptotically stable equilibrium.
\end{condition}
This is easily verified by noting that $h(cw)/c = Mw + c^{-1} z$, and since $z$ is finite, $h_c(w)$ converges uniformly as $c \to \infty$ to $h_\infty(w) = Mw$. The stability of the origin is guaranteed since the eigenvalues of $M$ have a negative real part.

Theorem 7 in Chapter 3 of \cite{borkar2008sto} states that if A\ref{cond:a1} - A\ref{cond:a4} hold, the following condition holds
\begin{condition}\label{cond:a5}
The iterates of \eqref{eq:SA_iterates} remain bounded almost surely, i.e., $\sup_k \|\hat{w}_{k}\|<\infty, \textrm{ a.s.}$
\end{condition}

Finally, we use a standard stochastic approximation result that, given that the above conditions hold, relates the convergence of the iterates of \eqref{eq:SA_iterates} with the asymptotic behavior of the ODE
\begin{equation}\label{eq:SA_ODE}
    \dot{w}(t) = h(w(t)).
\end{equation}
Since the eigenvalues of $M$ have a negative real part, \eqref{eq:SA_ODE} has a unique globally asymptotically stable equilibrium point, which by \eqref{eq:fixed_point_matrix_form} is exactly $\hat{w}* = (\hat{w}^*_{J}, \hat{w}^*_{M})$.
Formally, by Theorem 2 in Chapter 2 of \cite{borkar2008sto} we have that if A\ref{cond:a1} - A\ref{cond:a3} and A\ref{cond:a5} hold, then $\hat{w}_{k} \to \hat{w}*$ as $k \to \infty$ with probability 1.
\end{proof}

\end{document}